\pgfplotsset{compat=1.9}
\newtheorem{thm}{Theorem}
\newtheorem{lemma}[thm]{Lemma}
\newtheorem{corollary}[thm]{Corollary}
\newtheorem{assump}{Assumption}
\newtheorem{definition}{Definition}
\newcommand{\xs}{x^*}
\newcommand{\E}{\mathbb{E}}
\newcommand{\EE}{\mathfrak{E}}
\newcommand{\one}{\mathbb{1}}
\newcommand{\F}{\mathcal{F}}
\newcommand{\R}{\mathbb{R}}
\newcommand{\D}{\mathbb{D}}
\newtheoremstyle{TheoremNum}
{\topsep}{\topsep}              
{\itshape}                      
{}                              
{\bfseries}                     
{.}                             
{ }                             
{\thmname{#1}\thmnote{ \bfseries #3}
	{\normalfont(Main Theorem)}}
\theoremstyle{TheoremNum}
\newtheorem{thmn}{Theorem}
\newtheoremstyle{TheoremNumn}
{\topsep}{\topsep}              
{\itshape}                      
{}                              
{\bfseries}                     
{.}                             
{ }                             
{\thmname{#1}\thmnote{ \bfseries #3}
	{\normalfont(Restated)}}
\theoremstyle{TheoremNumn}
\begin{document}
\title{An Alternative View: When Does SGD Escape Local Minima?}

\author{
	Robert Kleinberg \\
	Department of Computer Science\\
	Cornell University \\
	\texttt{rdk@cs.cornell.edu}\\
	\and
	Yuanzhi Li \\
	Department of Computer Science\\
	Princeton University \\
	\texttt{yuanzhil@cs.princeton.edu}
	\and
	Yang Yuan \\
	Department of Computer Science\\
	Cornell University \\
	\texttt{yangyuan@cs.cornell.edu}
}
\maketitle

\begin{abstract}
Stochastic gradient descent (SGD) is widely used in machine learning. Although being commonly viewed as a fast but not accurate version of gradient descent (GD), it always finds better solutions than GD for modern neural networks. 

In order to understand this phenomenon, we take an alternative view that SGD is working on the convolved (thus smoothed) version of the loss function. We show that, even if the function $f$ has many bad local minima or saddle points, as long as for every point $x$, the weighted average of the gradients of its neighborhoods is one point convex with respect to the desired solution $x^*$, SGD will get close to, and then stay around $x^*$ with constant probability. More specifically, SGD will not get stuck at ``sharp'' local minima with small diameters, as long as the neighborhoods of these regions contain enough gradient information. The neighborhood size is controlled by step size and gradient noise. 

Our result identifies a set of functions that SGD provably works, which is much larger than the set of convex functions. Empirically, we observe that the loss surface of neural networks enjoys nice one point convexity properties locally, therefore our theorem helps explain why SGD works so well for neural networks.
\end{abstract}


\section{Introduction}
Nowadays, stochastic gradient descent (SGD), as well as its variants (Adam \cite{adam}, Momentum \cite{momentum}, Adagrad \cite{adagrad}, etc.) have become the de facto algorithms for training neural networks.
SGD runs iterative updates for the weights $x_t$:
$x_{t+1}=
x_t-\eta v_t$, where $\eta$ is the step size\footnote{In this paper, we use step size and learning rate interchangeably.}. $v_t$ is the stochastic gradient that satisfies $E[v_t]=\nabla f(x_t)$, and is usually computed using a mini-batch of the dataset. 

In the regime of convex optimization, SGD is proved to be a nice tradeoff between accuracy and efficiency: it requires more iterations to converge, but fewer gradient evaluations per iteration. Therefore, for the standard empirical risk minimizing problems with $n$ points and smoothness $L$,
to get to $\epsilon$-close to $x^*$,
GD needs $O(Ln/\epsilon)$ gradient evaluations \cite{nesterov2013introductory}, but SGD with reduced variance
only needs $O(n \log \frac1{\epsilon}+\frac{L}\epsilon)$ gradient evaluations \cite{SVRG,SAGA,SAG,SVRG++}. 
In these scenarios, noise is a by-product of cheap gradient computation, and does not help training. 

By contrast,
for non-convex optimization problems like training neural networks, noise seems crucial. 
It is observed that with the help of noisy gradients, SGD does not only converge faster, but also converge to a better solution compared with GD \cite{largebatchtraining}. To formally understand this phenomenon, people have analyzed the role of noise in various settings. For example, it is proved that noise helps to escape saddle points \cite{Ge2015,escapesaddlepointsefficiently}, 
gives better generalization \cite{train-faster,generalizationboundsofsgld}, 
and also guarantees polynomial hitting time of good local minima under some assumptions \cite{hittingtime}.

However, it is still unclear why SGD could converge to better local minima than GD. Empirically, in additional to the gradient noise, the step size is observed to be a key factor in optimization. More specifically, small step size helps refine the network and converge to a local minimum, while large step size helps escape the current local minimum and 
go towards a better one  \cite{snapshotensembles,sgdr}. 
Thus, standard training schedule for modern networks uses large step size first, and shrinks it later \cite{resnet,densenet}. 
While using large step sizes to escape local minima
matches with intuition, the existing analysis on SGD for non-convex objectives always considers the small-step-size settings \cite{Ge2015,escapesaddlepointsefficiently,train-faster,hittingtime}.

\begin{figure}
\begin{minipage}{0.48\textwidth}
		\includegraphics[width=\textwidth]{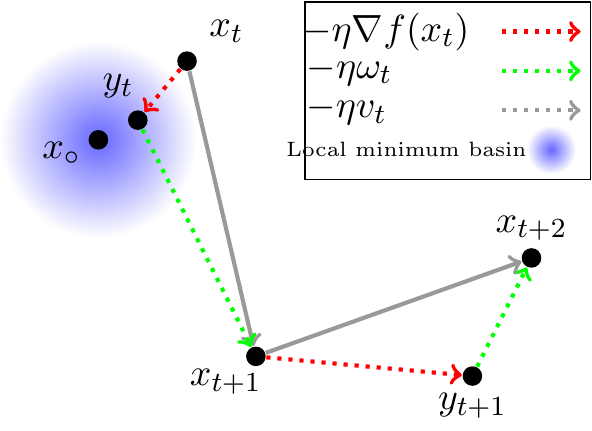}

	\caption{SGD path $x_t\rightarrow x_{t+1}$ can be decomposed into 
	$x_t\rightarrow y_t \rightarrow x_{t+1}$. If the local minimum basin has small diameter, the gradient at $x_{t+1}$ will point away from the basin.
}
	\label{fig:illustration}
\end{minipage}\hspace{10pt}
\begin{minipage}{0.48\textwidth}
	\includegraphics[width=\textwidth]{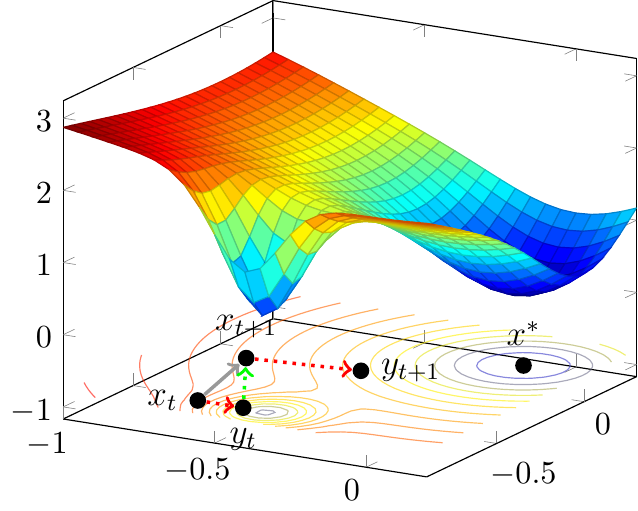}
%
%
%
%
%
%
%
%
%
%
	\caption{3D version of Figure \ref{fig:illustration}: SGD could escape a local minimum within one step. }
	\label{fig:mesh}
\end{minipage}
\end{figure}

See Figure \ref{fig:illustration}  for an illustration. 
Consider the scenario that for some $x_t$, 
instead of pointing to the solution $x^*$ (not shown), its negative gradient points to a bad local minimum $x_{\circ}$, so following the full gradient we will arrive $y_t\triangleq x_t - \eta \nabla f(x_t)$. Fortunately, since we are running SGD, the actual direction we take is $-\eta v_t= -\eta(\nabla f(x_t)+ \omega_t)$, where $\omega_t$ is the noise with $\E[\omega_t]=0, 
\omega_t\sim W(x_t)$\footnote{$W(x_t)$ is data dependent.}. As we show in Figure \ref{fig:illustration}, if we take a large $\eta$, we may get out of the basin region with the help of noise, i.e., from $y_t$ to $x_{t+1}$. Here, getting out of the basin means the negative gradient at $x_{t+1}$ no longer points to $x_{\circ}$ (See also Figure \ref{fig:mesh}). 

%

To formalize this intuition, instead of analyzing the sequence $x_t \rightarrow x_{t+1}$, let us look at the sequence $y_t \rightarrow y_{t+1}$, where $y_t$ is defined to be $x_t - \eta \nabla f(x_t)$, as in the preceding paragraph. The SGD algorithm never computes these vectors $y_t$, but we are only using them as an analysis tool. From the equation $x_{t+1} = y_t - \eta \omega_t$ we obtain the following update rule relating $y_{t+1}$ to $y_t$.
\begin{equation}
y_{t+1}
=y_t- \eta \omega_t
-\eta\nabla f(y_t-\eta\omega_t)
\label{eqn:yt:update}
\end{equation}

The random vector $\eta\omega_t$ in 
(\ref{eqn:yt:update}) has expectation $0$,
so if we take the expectation of both sides of (\ref{eqn:yt:update}),
we get $\E_{\omega_t}[y_{t+1}]=
y_t -\eta\nabla \E_{\omega_t} [   f(y_t- \eta \omega_t)]$.
Therefore, if we define $g_t$ to be the function $g_t(y)
= \E_{\omega_t} [   f(y- \eta \omega_t)]$, which is simply the original function $f$ convolved with the $\eta$-scaled gradient noise, then the sequence $y_t$ is approximately doing gradient descent on the sequence of functions $(g_t)$. 



This alternative view helps to explain why SGD converges to a good local minimum, even when $f$ has many other sharp local minima. 
Intuitively, sharp local minima are eliminated by the convolution operator 
that transforms $f$ to $g_t$, since convolution has the effect of smoothing out short-range fluctuations. 
This reasoning ensures that SGD converges to a good local minimum under much weaker conditions, because instead of imposing convexity or one-point convexity requirements on $f$ itself, we  only require those properties to hold for the smoothed functions obtained from $f$ by convolution. We can formalize the foregoing argument using the following assumption.

\begin{assump}[Main Assumption]
For a fixed point $x^*$\footnote{Notice that $x^*$ is not necessarily the global optimal in the original function $f$ due to the convolution operator.},
noise distribution $W(x)$, step size $\eta$, 
the function $f$ is $c$-one point strongly convex with respect to $x^*$ after convolved with noise. That is,
 for any $x, y$ in domain $\mathbb{D}$ s.t. $y=x-\eta \nabla f(x)$, 
\begin{equation}
\langle -\nabla \E_{\omega \in W(x)}f(y-\eta \omega ), \xs-y\rangle\geq c\|\xs-y\|_2^2
\label{eqn:assumption:conv}
\end{equation}
\label{assump:main:assumption}
\end{assump}

For point $y$, 
since the direction $x^*-y$ points to $x^*$, by having positive inner product with $x^*-y$, we know the direction $-\eta\nabla f(y_t-\eta\omega_t)
$ in (\ref{eqn:yt:update}) approximately points to $x^*$ in expectation (See more discussion on one point convexity in Appendix). Therefore,
$y_t$ will converge to $x^*$ with decent probability:

\begin{thm}[Main Theorem, Informal]
Assume $f$ is smooth, for every $x\in \D$, $W(x)$ s.t., 
$\max_{\omega\sim W(x)}\|\omega\|_2 \leq r$. 
Also assume $\eta$ is bounded by a constant, and
Assumption \ref{assump:main:assumption} holds with $x^*, \eta$, and $c$.
For $T_1\geq \tilde {O}(\frac{1}{\eta c})$\footnote{We use $\tilde {O}$ to hide $\log$ terms here.}, and any $T_2>0$, 
with probability at least $1/2$, 
we have
	$
\|	y_{t}- x^*\|_2^2 \leq O(\log (T_2) \frac{\eta r^2}{c})$
for any $t$ s.t., $T_1+T_2\geq t\geq T_1$. 
\label{thm:main}
\end{thm}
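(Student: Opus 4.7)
My plan is to track the squared deviation $Z_t := \|y_t - x^*\|^2$ and first establish a one-step drift recursion. Starting from the update rule (\ref{eqn:yt:update}), I would expand
\begin{equation*}
Z_{t+1} = Z_t - 2\eta\,\langle y_t - x^*,\, \omega_t + \nabla f(y_t - \eta\omega_t)\rangle + \eta^2 \|\omega_t + \nabla f(y_t - \eta\omega_t)\|_2^2
\end{equation*}
and take the conditional expectation over $\omega_t \sim W(x_t)$. The term $\langle y_t - x^*, \omega_t\rangle$ vanishes because $\E[\omega_t]=0$, and the cross term $\langle y_t - x^*, \E[\nabla f(y_t - \eta\omega_t)]\rangle$ is controlled by Assumption \ref{assump:main:assumption}: after exchanging gradient and expectation (justified by smoothness of $f$ and boundedness of $W(x)$), it contributes at most $-2\eta c Z_t$. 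The quadratic remainder is bounded using $\|\omega_t\|_2\le r$ and smoothness, producing a floor of $O(\eta^2 r^2)$ plus an $O(\eta^2 L^2) Z_t$ correction that is absorbed into the contraction factor once $\eta$ is taken sufficiently small. The net outcome is a recursion of the form
\begin{equation*}
\E[Z_{t+1}\mid \mathcal F_t] \;\le\; (1-\eta c)\,Z_t + C_1\,\eta^2 r^2.
\end{equation*}

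Iterating this recursion gives $\E[Z_{T_1}] \leq (1-\eta c)^{T_1} Z_0 + C_2\,\eta r^2/c$, so choosing $T_1 = \tilde O(1/(\eta c))$ so that the transient term is negligible yields the equilibrium estimate $\E[Z_{T_1}] \le Z^* := C_3\,\eta r^2/c$, and the same recursion keeps $\E[Z_t]\le 2Z^*$ for every $t\ge T_1$. This handles convergence in expectation; the burn-in phase of Theorem \ref{thm:main} is complete at this point.

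For the staying bound over $[T_1, T_1+T_2]$, a direct Markov-plus-union-bound would only yield an $O(T_2\,\eta r^2/c)$ envelope, so to obtain the claimed $O(\log(T_2)\,\eta r^2/c)$ bound I would construct an exponential supermartingale. Specifically, for a small enough $\alpha>0$ (depending on $\eta c$, smoothness, and $r$), I would show that $M_t := \exp\!\bigl(\alpha(Z_t - 2Z^*)^+\bigr)$ is a supermartingale while $Z_t$ exceeds $2Z^*$, by controlling $\E[\exp(\alpha(Z_{t+1}-Z_t))\mid \mathcal F_t]$ via the drift inequality above together with $\|\omega_t\|_2\le r$. A stopping-time argument combined with Ville's maximal inequality then gives $\Pr\bigl[\max_{T_1\le t\le T_1+T_2} Z_t \ge 2Z^* + K/\alpha\bigr] \le \E[M_{T_1}]\,e^{-K}$; a union-type bound across the $T_2$ steps and choosing $K = \Theta(\log T_2)$ converts this into the advertised high-probability statement with failure probability at most $1/2$.

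The main obstacle will be the exponential-supermartingale step: the one-step increment $Z_{t+1}-Z_t$ contains an $\eta^2\|\omega_t\|_2^2$ term that is quadratic in the noise, so controlling $\E[\exp(\alpha\eta^2\|\omega_t\|_2^2)]$ requires taking $\alpha$ small enough relative to $\eta^2 r^2$ while still keeping $K/\alpha$ at the order of $\log(T_2)\cdot \eta r^2/c$. A secondary technical point is ensuring that $y_t - \eta\omega_t$ remains inside the domain $\D$ where Assumption \ref{assump:main:assumption} is posited; I expect this is handled by first showing that the stopping time $\tau$ at which $y_t$ exits a neighborhood of $x^*$ is unlikely within the horizon, using the same supermartingale machinery.
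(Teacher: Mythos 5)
Your drift recursion (Step 1) and burn-in-by-iteration-plus-Markov (Step 2) are essentially identical to the paper's: the paper derives $\E[\|y_{t+1}-\xs\|_2^2\mid\F_t]\le(1-\lambda)\|y_t-\xs\|_2^2+b$ with $\lambda=2\eta c-\eta^2L^2$ and $b=\eta^2r^2(1+\eta L)^2$, iterates to get $\E[\|y_{T_1}-\xs\|_2^2]\le 2b/\lambda$, and then applies Markov. Where you diverge is the staying bound. The paper rescales the deviation into $G_t=(1-\lambda)^{-t}(\|y_t-\xs\|_2^2-b/\lambda)$, argues that $G_t\one_{\EE_{t-1}}$ is a supermartingale with increments $|d_\tau|\le(1-\lambda)^{-\tau}M$ (with $M\sim\eta^2r^2+\eta r\delta$ under the inductive hypothesis $\|y_\tau-\xs\|\le\delta$), and applies Azuma; the crucial trick is that after undoing the rescaling, $(1-\lambda)^t\sqrt{\sum_\tau(1-\lambda)^{-2\tau}}$ telescopes into a $t$-free quantity of order $1/\sqrt{\eta c}$. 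The $\log T_2$ factor then comes from a one-step induction across $T_2$ levels, each failing with probability $1/\zeta$. You instead propose an exponential supermartingale $M_t=\exp(\alpha(Z_t-2Z^*)^+)$ with Ville's maximal inequality and a stopping-time/excursion decomposition, where the $\log T_2$ factor arises from a union over at most $T_2$ excursion start times. Both paths reach the same conclusion with the same scaling of the constant, since $\alpha\sim c/(\eta r^2)$ and $K\sim\log T_2$ give $K/\alpha\sim\log(T_2)\eta r^2/c$. The trade-off is roughly this: the Azuma route is elementary and handles the state-dependent (and geometrically growing after rescaling) increments via a clever cancellation, at the cost of an explicit induction with indicator-gated martingales; your route is conceptually cleaner as a maximal-inequality argument, but you must (a) control $\E[\exp(\alpha\eta^2\|\omega_t\|_2^2)]$, (b) handle the cross term $-2\eta\langle y_t-\xs,\omega_t\rangle$ whose magnitude scales like $\eta r\sqrt{Z_t}$ so that the MGF condition only holds on the band $2Z^*\le Z_t\le 2Z^*+K/\alpha$, and (c) deal carefully with the reflected process $(Z_t-2Z^*)^+$, which is not itself a supermartingale at the boundary (every time $Z_t$ dips below $2Z^*$ and re-crosses, the process restarts), which is precisely what forces the excursion/union bound and hence the $\log T_2$. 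You acknowledge all three obstacles, so the plan is sound; it would require more bookkeeping than the paper's Azuma argument but yields the same rate.
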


Notice that our main theorem not only says SGD will get close to $\xs$, but also says with constant probability, SGD will stay close to $\xs$ for the future $T_2$ steps. As we will see in Section \ref{sec:empirical}, 
we observe that
Assumption \ref{assump:main:assumption} holds along the SGD trajectory for the modern neural networks when the noise comes from real data mini-batches. Moreover, the SGD trajectory matches with our theory prediction in practice.  

Our main theorem can also help explain 
why SGD could escape ``sharp'' local minima and converge to ``flat'' local minima in practice \cite{largebatchtraining}. Indeed, the sharp local minima have small loss value and small diameter, so after convolved with the noise kernel, 
they easily disappear, which means Assumption \ref{assump:main:assumption} holds. However, flat local minima have large diameter, so they still exists after convolution. In that case, our main theorem says, it is more likely that SGD will converge to flat local minima, instead of sharp local minima.

\begin{figure*}[t]
	\begin{center}
	\includegraphics[width=0.84\textwidth]{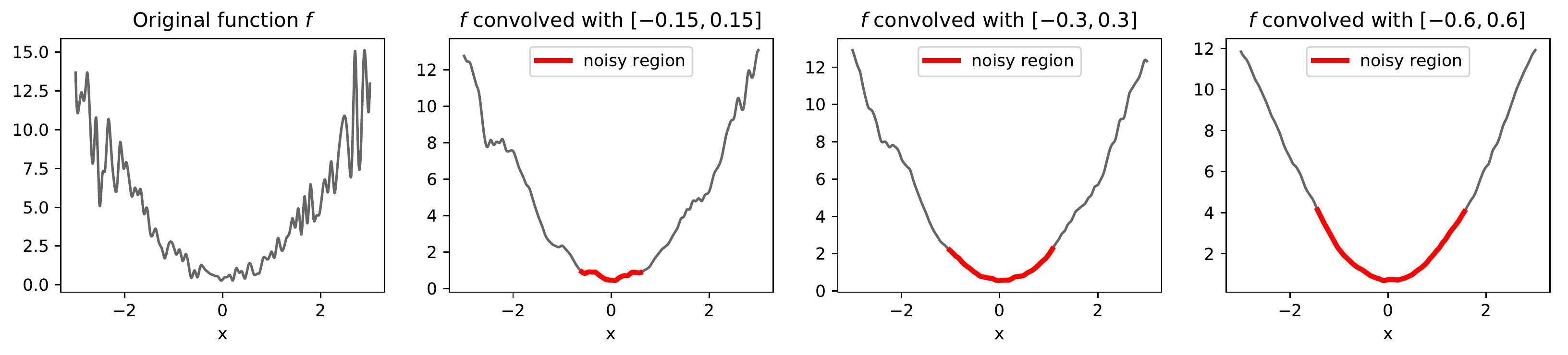}
	\includegraphics[width=0.84\textwidth]{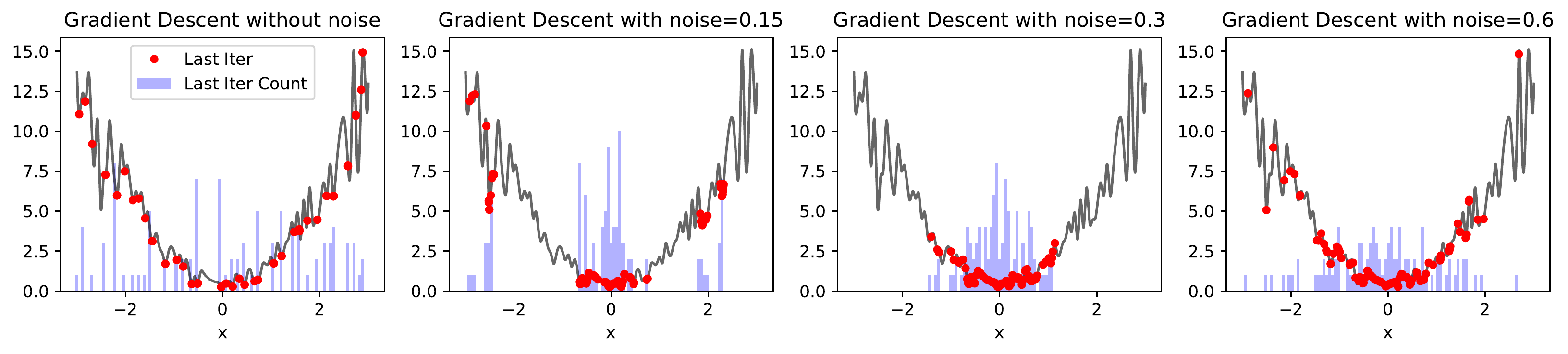}
	\includegraphics[width=0.84\textwidth]{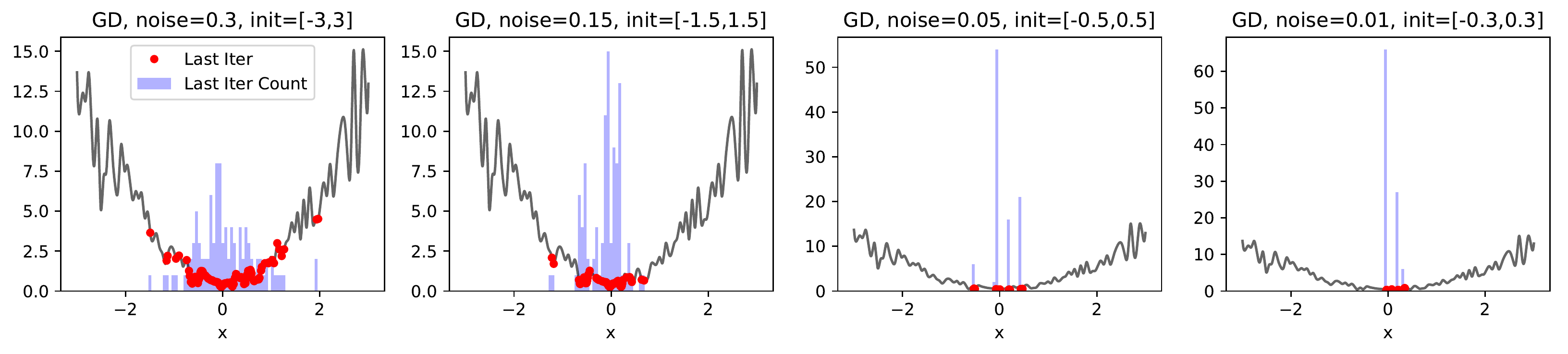}
	\end{center}
	\caption{
		Running SGD on a spiky function $f$. 
		\textbf{Row 1:} $f$ gets smoother after convolving with uniform random noise. 
		\textbf{Row 2:} Run SGD with different noise levels. Every figure is obtained with $100$ trials with different random initializations. Red dots represent the last iterates of these trials, while blue bars represent the cumulative counts. GD without noise easily gets stuck at various local minima, while SGD with appropriate noise level converges to a local region. 
		\textbf{Row 3:} In order to get closer to $x^*$, one may run SGD in multiple stages with shrinking learning rates. 
	}
	\label{fig:gd_noise_converge}	
\end{figure*}

\subsection{Related Work}
Previously, people already realized that the noise in the gradient could help SGD to escape saddle points \cite{Ge2015,escapesaddlepointsefficiently} or achieve better generalization \cite{train-faster,generalizationboundsofsgld}.
With the help of noise, 
SGD can also be viewed as doing approximate Bayesian inference \cite{SGDasapproximatebayesianinference} or 
variational inference \cite{limitcycles}.
Besides, it is proved that SGD with extra noise could ``hit'' a local minimum with small loss value in polynomial time under some assumptions \cite{hittingtime}. However, 
the extra noise is too big to guarantee convergence, and that model cannot deal with escaping sharp local minima. 

Escaping sharp local minima for neural network is important, because it is conjectured (although controversial \cite{sharpminimacangenearlizefordeepnets}) that flat local minima may lead to better generalization
\cite{flat_minima_2,largebatchtraining,entropy-sgd}. It is also observed that the correct learning rate schedule (small or large) is crucial for escaping bad local minima \cite{snapshotensembles,sgdr}.  Furthermore, 
solutions that are farther away from the initialization may lead to wider local minima and better generalization \cite{trainlonger}.
Under a Bayesian perspective, 
it is shown that the noise in stochastic gradient could drive SGD away from sharp minima, which decides the optimal batch size
\cite{abayesianperspectiveongeneralization}. There are also explanations for why small batch methods prefers flat minima while large batch methods are not, by investigating the canonical quadratic sums problem \cite{theimpactoflocalgeomatry}.

To visualize the loss surface of neural network, 
a common practice is projecting it onto a one dimensional line \cite{qualitativelycharacterizing}, which was observed to be convex.  For the simple two layer neural network, a local one point strongly convexity property provably holds under Gaussian input assumption \cite{twolayerconvergence}.


\section{Motivating Example}
Let us first see a  simple example in Figure \ref{fig:gd_noise_converge}. We use $\mathrm{F}_{r,c}$ to denote the sub-figure at row $r$ and column $c$. The function $f$ at $\mathrm{F}_{1,1}$ is a approximately  convex function, but very spiky. Therefore, GD easily gets stuck at various local minima, see $\mathrm{F}_{2,1}$. However, we want to get rid of those spurious local minima, and get a point near $x^*=0$. 

If we take the alternative view
that SGD works on the convolved version of $f$ ($\mathrm{F}_{1,2}$, $\mathrm{F}_{1,3}$, $\mathrm{F}_{1,4}$), we find that those functions are much smoother and contain few local minima. 
However, the gradient noise here is a double-edged sword. On one hand, if the noise is small,
the convolved $f$ is still somewhat non-convex, then
SGD may find a few bad local minima as shown in $\mathrm{F}_{2,2}$. On the other hand, if the noise is too large, the noise dominates the gradient, and SGD will act like random walk, see $\mathrm{F}_{2,4}$.

$\mathrm{F}_{2,3}$ seems like a nice tradeoff, as all trials converges to a local region near $0$, but the region is too big (most points are in $[-1.5,1.5]$). In order to get closer to $0$, we may ``restart'' SGD with a point in $[-1.5,1.5]$, using smaller noise level $0.15$. Recall in $\mathrm{F}_{2,2}$, SGD fails because the convolved $f$ has a few non-convex regions ($\mathrm{F}_{1,2}$), so SGD may find spurious local minima. However, those local minima are outside $[-1.5,1.5]$. The convolved $f$ in $\mathrm{F}_{1,2}$ restricted in $[-1.5,1.5]$ is pretty convex, so if we start a point in this region, SGD converges to a smaller local region centered at $0$, see $\mathrm{F}_{3,2}$.

We may do this iteratively, with even smaller noise levels and smaller initialization regions, 
and finally we will get pretty close to $0$ with decent probability, see $\mathrm{F}_{3,3}$ and $\mathrm{F}_{3,4}$.

\section{Main Theorem}
\label{sec:main:theorem}

\begin{definition}[Smoothness]
	Function $f\in \R^d\rightarrow \R$ is $L$-smooth, if for any $x,y\in \R^d$, 
	\[
	f(y)\leq f(x) + \langle f'(x), y-x \rangle + \frac{L}{2} \|y-x\|_2^2
	\]
\end{definition}
Assume that we are running SGD on the sequence $\{x_t\}$. 
Recall the update rule (\ref{eqn:yt:update}) for $y_t$. Our main theorem says that $\{y_t\}$ is converging to $\xs$ and will stay around $\xs$ afterwards.

\begin{thmn}[\ref{thm:main}]
	Assume $f$ is $L$-smooth, 	for every $x\in \D$, $W(x)$ s.t., 
$\max_{\omega\sim W(x)}\|\omega\|_2 \leq r$.
For a fixed target solution $x^*$, 
if there exists constant $c, \eta> 0$, 
such that 
Assumption \ref{assump:main:assumption} holds with $x^*, \eta, c$, and
$\eta < \min\{\frac1 {2L} , \frac{c}{L^2}, \frac1{2c}\}$, $\lambda \triangleq 2 \eta c -\eta^2 L^2$, 
$b\triangleq \eta^2 r^2 (1+\eta L )^2$. 
Then for any fixed $T_1\geq \frac{\log  (\lambda \|y_0-x^*\|_2^2/b )}{\lambda }$
and $T_2> 0$, 
with probability at least $1/2$, 
we have  
$
\|y_T-x^*\|_2^2 \leq \frac{20b}{\lambda }
$ 
and 
$
\|y_t-x^*\|_2^2 \leq 
O\left (\frac{ \log(T_2)	b}{\lambda }
\right )
$ for all $t$ s.t., $T_1+T_2\geq t\geq T_1$.
\end{thmn}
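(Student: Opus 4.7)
The plan is to treat the squared distance $V_t \triangleq \|y_t-x^*\|_2^2$ as a Lyapunov potential and show it satisfies a contraction-plus-noise recurrence
\[
\E[V_{t+1}\mid y_t] \;\leq\; (1-\lambda)\, V_t + b,
\]
from which both conclusions of the theorem follow. The first conclusion (that $V_T$ drops to $O(b/\lambda)$ after a burn-in of length $T_1$) comes from iterating the recurrence and applying Markov; the second (that $V_t$ stays $O(\log(T_2)\,b/\lambda)$ over the next $T_2$ steps with constant probability) requires a maximal inequality and is the delicate part.

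For the one-step bound I would expand $V_{t+1} = V_t - 2\langle y_t-x^*, u_t\rangle + \|u_t\|_2^2$ with $u_t = \eta\omega_t + \eta\nabla f(y_t-\eta\omega_t)$ and take the conditional expectation over $\omega_t$. Since $\E\omega_t=0$, the cross term reduces to $-2\eta\langle y_t-x^*, \nabla g_t(y_t)\rangle$, where $g_t(y) = \E_{\omega\sim W(x_t)} f(y-\eta\omega)$; Assumption~\ref{assump:main:assumption} applied at $x=x_t,\, y=y_t$ bounds this by $-2\eta c V_t$. For the square term I would decompose $\nabla f(y_t-\eta\omega_t) = \nabla g_t(y_t) + [\nabla f(y_t-\eta\omega_t)-\nabla g_t(y_t)]$: $L$-smoothness of $g_t$ makes the drift part at most $\eta^2 L^2 V_t$, while $\|\omega_t\|\leq r$ together with the $L\eta r$ smoothness correction for the residual produces a pure-noise contribution of order $\eta^2 r^2(1+\eta L)^2 = b$. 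Collecting gives the recurrence with $\lambda = 2\eta c - \eta^2 L^2 > 0$ under the stated step-size restrictions.

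Iterating yields $\E V_t \leq (1-\lambda)^t V_0 + b/\lambda$, so a burn-in of $T_1 = \Theta(\log(\lambda V_0/b)/\lambda)$ makes $\E V_{T_1} = O(b/\lambda)$, and Markov gives $V_{T_1}\leq 20b/\lambda$ with probability at least $\tfrac34$. For the uniform bound over $[T_1, T_1+T_2]$, a per-step Markov plus union bound would lose a factor $T_2$, which is too weak. Instead, I would write $V_{t+1} = \E[V_{t+1}\mid y_t] + \Delta_t$, where $\Delta_t$ is a martingale difference whose variance and per-step magnitude are controlled by $b$ and $V_t$ (via $\|\omega_t\|\leq r$ and smoothness), and apply a Freedman- or Azuma-type bound to the discounted sum $\sum_{s\leq t}(1-\lambda)^{t-s}\Delta_s$. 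Together with the contractive drift whenever $V_t>O(b/\lambda)$, this yields a sub-exponential tail $\Pr[V_t>M b/\lambda]\leq e^{-\Omega(M)}$ at any fixed $t$, and a union bound over the $T_2$ steps absorbs the $\log T_2$ factor into the radius.

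The hardest step is this maximal inequality: near equilibrium $V_t$ is not itself a super-martingale, the per-step noise is of order $b$, and the per-step increment scales with the current $V_t$, which complicates a direct application of Azuma. A clean way to finish is a stopping-time/restart argument, treating each excursion of $V_t$ above $C\log(T_2)\,b/\lambda$ as a rare event with exponentially small failure probability from the contractive drift, and union-bounding over the at most $T_2$ possible excursions. The one-step computation itself is essentially the standard SGD-on-strongly-convex argument, transported from $f$ to the convolved surrogate $g_t$ using Assumption~\ref{assump:main:assumption}.
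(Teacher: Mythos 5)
Your roadmap matches the paper's proof essentially step for step: the same one-step Lyapunov recurrence $\E[V_{t+1}\mid\F_t]\le(1-\lambda)V_t+b$ with $\lambda=2\eta c-\eta^2L^2$ and $b=\eta^2r^2(1+\eta L)^2$; the same Markov argument for the burn-in phase; and, for the maximal bound, the same rescaled supermartingale $G_t=(1-\lambda)^{-t}(V_t-b/\lambda)$, which is exactly your ``discounted sum of the martingale differences.'' The difficulty you flag at the end --- that the per-step increment of $G_t$ scales with the current $V_t$, so a direct Azuma application does not close --- is precisely what the paper handles by truncating to the good event $\EE_t=\{\forall\tau\le t:\|y_\tau-x^*\|\le\delta\}$ and applying Azuma to the stopped supermartingale $G_t\one_{\EE_{t-1}}$, then arguing by induction/union bound over the $T_2$ steps; your ``stopping-time/restart, union-bounded over at most $T_2$ excursions'' description is this same argument. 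In short, the proposal is correct and follows the paper's route; what remains is the bookkeeping (the explicit increment bound of order $(1-\lambda)^{-t}(\eta^2r^2+\eta r\delta)$, the choice $\delta=\mu\sqrt{b/\lambda}$ with $\mu\gtrsim\log^{1/2}(T_2)$, and the final arithmetic in Lemma~\ref{lem:small_lemma}).
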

We defer the proof to Section \ref{appendix:main:thm:proof}.

\textbf{Remark.}
%
For fixed $c$, there exists a lower bound on $\eta$ to satisfy Assumption \ref{assump:main:assumption}, so $\eta$ cannot be arbitrarily small. However, the main theorem says within $T_1+T_2$ steps, SGD will stay in a local region centered at $x^*$ with diameter $O\left (\frac{ \log(T_2)b}{\lambda }
\right )$, which is essentially $\tilde {O}(
\eta r^2/c
)$ that scales with $\eta$. In order to get closer to $x^*$, a common trick in practice is to restart SGD with smaller step size $\eta'$ within the local region. If $f$ inside this region has better geometric properties (which is usually true), one gets better convergence guarantee:

\begin{corollary}[Shrinking Learning Rate]
If the assumptions in Theorem \ref{thm:main} holds, and $f$ restricted in the local region 
$\mathbb{D'}\triangleq \{x| \|x-\xs\|\leq 
\frac{20b}{\lambda}\}$ satisfy the same assumption with $c'>c, \eta'<\eta$, then if we run SGD with $\eta$ for the first $T_1\geq  \frac{\log (\frac{\lambda d}{b})}{\lambda }$ steps, and with $\eta'$ for the next $T_2\geq 
\frac{\log (\frac{\lambda \frac{20b'}{\lambda}}{b'})}{\lambda'}$ steps, 
with probability at least $1/4$, 
we have  
$
\|y_{T_1+T_2}-x^*\|_2^2 \leq \frac{20b'}{\lambda'}
<\frac{20b}{\lambda}
$.
\end{corollary}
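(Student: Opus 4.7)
The plan is to invoke Theorem~\ref{thm:main} twice and glue the two success events with a union bound. In the first stage we run SGD with step size $\eta$ for $T_1$ steps; Theorem~\ref{thm:main} applied with the constants $c,\eta,\lambda,b$, together with the hypothesis $T_1 \geq \log(\lambda d/b)/\lambda$ (interpreting $d$ as an upper bound on $\|y_0-\xs\|_2^2$), gives $\|y_{T_1}-\xs\|_2^2 \leq 20b/\lambda$ with probability at least $1/2$. In other words, the iterate $y_{T_1}$ lands in $\mathbb{D}'$.

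In the second stage we ``restart'' SGD at $y_{T_1}$ with the smaller step size $\eta'$. Since by hypothesis the main assumption holds throughout $\mathbb{D}'$ with the improved parameters $c',\eta'$, Theorem~\ref{thm:main} applies again to this restarted trajectory, now with $\lambda' = 2\eta' c' - (\eta')^2 L^2$ and $b' = (\eta')^2 r^2 (1+\eta' L)^2$. Its warm-up requirement is $T_2 \geq \log(\lambda' \|y_{T_1}-\xs\|_2^2 / b')/\lambda'$, which is implied by the stated lower bound on $T_2$ after substituting $\|y_{T_1}-\xs\|_2^2 \leq 20b/\lambda$ (modulo what looks like a harmless $\lambda \leftrightarrow \lambda'$ swap in the corollary's displayed threshold). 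Conditional on first-stage success, the theorem gives $\|y_{T_1+T_2}-\xs\|_2^2 \leq 20b'/\lambda'$ with probability at least $1/2$; a union bound then yields overall probability at least $1/4$. The concluding inequality $20b'/\lambda' < 20b/\lambda$ is a routine calculation: the numerator $b \propto \eta^2$ shrinks quadratically in the step size while the denominator $\lambda \approx 2\eta c$ shrinks only linearly, and having $c' > c$ only helps further.

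The main obstacle is that the restarted invocation of Theorem~\ref{thm:main} requires the one-point-convexity-after-convolution assumption to hold at every point the SGD trajectory visits during the second stage, whereas by hypothesis we only know it on $\mathbb{D}'$. To avoid circularity I would introduce the stopping time $\tau := \inf\{t \geq T_1 : y_t \notin \mathbb{D}'\}$ and apply the analysis underpinning Theorem~\ref{thm:main} to the stopped process $y_{t \wedge \tau}$; the drift and noise terms coincide with the original ones up to time $\tau$, so all the martingale estimates go through. On the good event, the ``stays close'' guarantee built into Theorem~\ref{thm:main} yields $\|y_t-\xs\|_2^2 \leq O(\log(T_2)\, b'/\lambda')$ uniformly for $T_1 \leq t \leq T_1+T_2$, which is well below $20b/\lambda$ for any non-exponentially-large $T_2$. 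Hence $\tau > T_1 + T_2$ on the good event, and the restarted analysis is valid on the entire second stage, closing the loop.
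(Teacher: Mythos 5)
The paper offers no proof of this corollary---it's presented as an immediate consequence of Theorem~\ref{thm:main}---so the comparison is against the obvious two-phase argument, which is indeed what you execute. Your core strategy (invoke Theorem~\ref{thm:main} on the first $T_1$ steps to land in $\mathbb{D}'$ with probability at least $1/2$, condition on that event, re-invoke the theorem with the primed constants for the next $T_2$ steps, and combine) is the right one, and your reading of the stated $T_2$ threshold as a typo is also correct (the argument of the $\log$ literally simplifies to $20$; the intended expression should involve $20b/\lambda$ in place of $20b'/\lambda$ and $\lambda'$ in place of the leading $\lambda$, mirroring the $T_1$ threshold in the theorem).

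Two remarks. First, a small terminological error: the final probability combination is not a union bound (which would give $\Pr(\text{fail}_1 \cup \text{fail}_2) \le 1/2 + 1/2 = 1$, vacuous) but the chain rule $\Pr(A \cap B) = \Pr(A)\Pr(B \mid A) \ge \tfrac12 \cdot \tfrac12 = \tfrac14$. The sentence immediately before already describes the conditioning correctly, so this is a naming slip rather than a reasoning error.

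Second, and more substantively, the stopping-time patch is the right tool but you close the loop with a claim that is not actually guaranteed. You correctly identify the obstruction: the primed assumption holds only on $\mathbb{D}'$, so the drift and supermartingale estimates from Theorem~\ref{thm:main}'s proof are valid for the second stage only while the iterates remain in $\mathbb{D}'$. Your resolution says the ``stays close'' radius $O(\log(T_2)\, b'/\lambda')$ is ``well below $20b/\lambda$ for any non-exponentially-large $T_2$,'' but this is a comparison between two quantities that can be of the same order: since $b/\lambda \approx \eta r^2/(2c)$ and $b'/\lambda' \approx \eta' r^2/(2c')$, the needed inequality is roughly $42^2\log(9T_2/4)\cdot \eta'/c' \lesssim 20\,\eta/c$, i.e., an explicit upper bound on $T_2$ (equivalently, a quantitative lower bound on how much $\eta/\eta'$ and $c'/c$ must grow). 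For a modest shrink, e.g.\ $\eta' = \eta/2$ and $c' \approx c$, the inequality fails for essentially all $T_2 \ge 2$. The clean way to do this is to rerun Steps~3--4 of the theorem's proof for the stopped process with $\delta'$ set directly to the radius of $\mathbb{D}'$ and verify that Lemma~\ref{lem:small_lemma}'s inequality holds with the primed constants; doing so surfaces exactly the missing quantitative hypothesis relating $\eta'$, $c'$, and $T_2$. That hypothesis is absent from the corollary as stated (so the gap is partly inherited from the paper), but a complete proof along your lines must name it explicitly rather than assert it holds generically.
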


This corollary can be easily generalized to shrink the learning rate multiple times. 

Our main theorem is based on the important assumption that the step size is bounded. If the step size is too big, even if the whole function $f$ is one point convex (a stronger assumption than Assumption \ref{assump:main:assumption}), and we run full gradient descent, we may not keep getting closer to $\xs$, as we show below.

\begin{thm}
	\label{thm:large_lr}
For function $f$, if $\forall x, \langle -\nabla f(x), \xs - x\rangle \leq c'\|\xs -x\|_2^2$, and we are at the point $x_t$. If we run full gradient descent with step size $\eta > \frac{2c'\|x_t-\xs\|_2^2 }{\|\nabla f(x_t)\|_2^2}$, we have $\|x_{t+1}-\xs\|_2^2 \geq \|x_t-\xs\|_2^2$. 
\end{thm}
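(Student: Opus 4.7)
The proof is a short, self-contained calculation, so the plan is mostly to identify the correct algebraic identity and then match it against the step-size hypothesis.

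First I would expand the squared distance after one gradient step. Writing $x_{t+1}=x_t-\eta\nabla f(x_t)$ and taking the squared norm against $x^*$ yields
\[
\|x_{t+1}-x^*\|_2^2=\|x_t-x^*\|_2^2-2\eta\,\langle\nabla f(x_t),\,x_t-x^*\rangle+\eta^2\|\nabla f(x_t)\|_2^2.
\]
This is the only structural identity needed; the two subsequent steps are just using the hypothesis on the cross term and the hypothesis on $\eta$ to absorb it.

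Next I would rewrite the hypothesis in a form that fits the cross term. The assumption $\langle-\nabla f(x),\,x^*-x\rangle\le c'\|x^*-x\|_2^2$ is the same as $\langle\nabla f(x_t),\,x_t-x^*\rangle\le c'\|x_t-x^*\|_2^2$, so
\[
-2\eta\,\langle\nabla f(x_t),\,x_t-x^*\rangle\ \ge\ -2\eta c'\,\|x_t-x^*\|_2^2.
\]
Plugging this lower bound into the identity gives
\[
\|x_{t+1}-x^*\|_2^2-\|x_t-x^*\|_2^2\ \ge\ \eta^2\|\nabla f(x_t)\|_2^2-2\eta c'\,\|x_t-x^*\|_2^2.
\]
Finally, the hypothesis $\eta>\frac{2c'\|x_t-x^*\|_2^2}{\|\nabla f(x_t)\|_2^2}$ is exactly the statement that the right-hand side is nonnegative (multiply both sides by $\eta\|\nabla f(x_t)\|_2^2>0$), which concludes the proof.

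There is essentially no obstacle here: the only thing to be careful about is the sign convention for the one-point-convexity-type inequality, since the hypothesis is stated with the negative gradient against $x^*-x$ rather than $\nabla f$ against $x-x^*$. Once that is unwound the result is immediate, and no smoothness assumption is required because we are only showing that a single step fails to decrease the distance.
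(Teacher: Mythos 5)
Your proof is correct and follows the same route as the paper: expand $\|x_{t+1}-x^*\|_2^2$ via the gradient-step identity, bound the cross term by the one-point-convexity hypothesis, and observe that the step-size condition makes the remaining quantity nonnegative. The only cosmetic difference is that the paper keeps the strict inequality ($\eta$ strictly exceeding the threshold actually gives $\|x_{t+1}-x^*\|_2^2 > \|x_t-x^*\|_2^2$), while you round down to the stated $\geq$; both are fine.
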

\begin{proof}
	The proof is straightforward and we defer it to Appendix \ref{appendix:large:lr:proof}.
\end{proof}
\begin{wrapfigure}{r}{0.31\textwidth}	
	\vspace{-1em}
	\begin{center}
		\vspace{-1em}
		\begin{tikzpicture}
		
		\node[font= \fontsize{7}{7}\selectfont] (target) at (0,-0.2) {$\xs$};
		\fill (0,0) circle (0.05);	
		\draw[dotted,color=blue!50, thick] (0,0) ++(110:1) arc (110:-10:1);
		\fill (0,0) ++(80:1) circle (0.05);	
		\node[font= \fontsize{7}{7}\selectfont] at (0.2,1.15) {$x_t$};
		\draw[thick,->] (0,0) ++(80:1) --+ (0.4,-0.3);
		\draw[thick,densely dotted, ->] (0,0) ++(80:1) --+ (1,-0.75);
		\node[font= \fontsize{7}{7}\selectfont,text width=36pt] at (2,0.3) {$x_{t+1}$ if\\ $\eta$ too big};
		\end{tikzpicture}
	\end{center}
	\vspace{-1em}
	\caption{When step size is too big, even the gradient is one point convex, we may still go farther away from $\xs$.}
	\vspace{-1em}
	\label{fig:illu_lr_large}
\end{wrapfigure}

This theorem can be best illustrated with Figure \ref{fig:illu_lr_large}. If $\eta$ is too big, although the gradient (the arrow) is pointing to the approximately correct direction, $x_{t+1}$ will be farther away from $\xs$ (going outside of the $\xs$-centered ball). 

Although this theorem analyzes the simple full gradient case, SGD is similar. In the high dimensional case, it is natural to assume that most of the noise will be orthogonal to the direction of $x_t-\xs$, therefore with additional noise inside the stochastic gradient, 
a large step size will drive $x_{t+1}$
away from $\xs$ more easily. 

Therefore, our paper provides a theoretical explanation for why picking step size is so important (too big or too small will not work). We hope it could lead to more practical guidelines in the future.

\section{Proof for Theorem \ref{thm:main}}
\label{appendix:main:thm:proof}

In the proof, we will use the following lemma. 
\begin{thm}[Azuma]
	\label{thm:azuma}
	Let $X_1, X_2, \cdots, X_n$ be independent random variables satisfying 
	$
	|X_i - E(X_i)|\leq c_i, \mathrm{for~} 1\leq i \leq n 
	$. We have the following bound for the sum $X=\sum_{i=1}^n X_i$:
	\[
	\Pr(|X-E(X)| \geq \lambda ) \leq 2 e^{-\frac{\lambda^2}{2\sum_{i=1}^n c_i^2}}.
	\]
\end{thm}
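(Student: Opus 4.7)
The plan is to establish the bound via the standard Chernoff / exponential moment technique. First, for any $s>0$, apply Markov's inequality to the random variable $e^{s(X-E(X))}$:
\[
\Pr(X-E(X)\geq \lambda)\leq e^{-s\lambda}\,E\!\left[e^{s(X-E(X))}\right].
\]
By independence of the $X_i$, the moment generating function factorises:
\[
E\!\left[e^{s(X-E(X))}\right]=\prod_{i=1}^n E\!\left[e^{s(X_i-E(X_i))}\right].
\]
So the remaining task is to bound each factor in terms of $c_i$.

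The key intermediate result is Hoeffding's lemma: if $Y_i\triangleq X_i-E(X_i)$ satisfies $E(Y_i)=0$ and $|Y_i|\leq c_i$, then $E[e^{sY_i}]\leq e^{s^2 c_i^2/2}$. To prove this I would use convexity of $y\mapsto e^{sy}$ together with the representation of $y\in[-c_i,c_i]$ as the convex combination $y=\tfrac{c_i-y}{2c_i}(-c_i)+\tfrac{c_i+y}{2c_i}(c_i)$; this yields $e^{sy}\leq \tfrac{c_i-y}{2c_i}e^{-sc_i}+\tfrac{c_i+y}{2c_i}e^{sc_i}$, and taking expectations using $E(Y_i)=0$ gives $E[e^{sY_i}]\leq \tfrac12(e^{-sc_i}+e^{sc_i})=\cosh(sc_i)$. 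The inequality $\cosh(u)\leq e^{u^2/2}$ can then be verified by comparing Taylor series coefficient-by-coefficient (term $u^{2k}/(2k)!$ vs $u^{2k}/(2^k k!)$, and $(2k)!\geq 2^k k!$).

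Plugging this back in gives
\[
\Pr(X-E(X)\geq \lambda)\leq \exp\!\left(-s\lambda+\tfrac{s^2}{2}\sum_{i=1}^n c_i^2\right).
\]
The right-hand side is minimised over $s>0$ at $s=\lambda/\sum_i c_i^2$, yielding $\exp\!\bigl(-\lambda^2/(2\sum_i c_i^2)\bigr)$. The symmetric lower-tail bound $\Pr(X-E(X)\leq -\lambda)\leq \exp\!\bigl(-\lambda^2/(2\sum_i c_i^2)\bigr)$ follows by repeating the argument with $-X_i$ in place of $X_i$ (the bounded-increment hypothesis is symmetric in sign). A union bound over the two tails then delivers the advertised factor of $2$.

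The only non-routine step is Hoeffding's lemma; all other steps are mechanical. One mild subtlety worth flagging is that the theorem as stated uses independent (not merely martingale-difference) variables, so no conditional expectation machinery is needed---independence lets the MGF factor directly, which is strictly simpler than the general Azuma--Hoeffding setup.
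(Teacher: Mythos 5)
Your proof is correct and is the canonical Chernoff--Hoeffding argument: Markov's inequality on the exponential moment, factorization of the MGF by independence, Hoeffding's lemma via convexity plus $\cosh(u)\le e^{u^2/2}$, optimization in $s$, and a union bound for the two tails. The paper states this result without proof as a standard tool (labeled ``Azuma,'' though as you rightly note, with independent rather than martingale-difference increments it is really Hoeffding's inequality, and no conditional-expectation machinery is needed), so there is no in-paper argument to diverge from; your write-up fills the gap correctly.
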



Our proof has four steps. 

\noindent
\textbf{Step 1.} Since Assumption \ref{assump:main:assumption} holds, we show that SGD always makes progress towards $\xs$ in expectation, plus some noise. 

Let filtration $\F_t=\sigma\{
	\omega_0,\cdots, \omega_{t-1}
	\}$, where $\sigma\{\cdot\}$ denotes the sigma field. Notice that for any $\omega_t\sim W(x_t)$, we have $\E[\omega_t | \F_t]=0$.
	
	Thus,
\begin{align*}
&\E[\|y_{t+1}-\xs\|_2^2|\F_t]
=
\E[	\|y_t - \eta \omega_t
-\eta\nabla f(y_t-\eta\omega_t)-\xs\|_2^2
|\F_t]
\\
= &
\E\Big[	\|y_t-\eta\nabla f(y_t-\eta\omega_t)-\xs\|_2^2
+ \|\eta\omega_t\|_2^2
-2 \langle 
\eta\omega_t,
y_t-\eta\nabla f(y_t-\eta\omega_t)-\xs
\rangle 
|\F_t\Big]\\
\leq &
\E\Big[	\|y_t-\eta\nabla f(y_t-\eta\omega_t)-\xs\|_2^2
+ \eta^2 r^2
-2 \langle 
\eta\omega_t,
-\eta\nabla f(y_t-\eta\omega_t)
+\eta \nabla f(y_t)
-\eta \nabla f(y_t)
\rangle 
|\F_t\Big]\\
\leq &
\E\Big[	\|y_t-\xs \|_2^2 + \eta^2\|\nabla f(y_t-\eta\omega_t)\|_2^2
- 
2\eta \langle 
-\nabla f(y_t-\eta\omega_t),
\xs-y_t
\rangle 
+ \eta^2 r ^2 
+ 2 \eta^3 r^2 L 
|\F_t\Big]	\\		
\leq &
\|y_t-\xs \|_2^2 
+
\E\Big[	\eta^2\|\nabla f(y_t-\eta\omega_t)\|_2^2
|\F_t\Big] + \eta^2r^2
- 
2\eta \langle 
-\nabla \E_{\omega_t\in W(x_t)} f(y_t- \eta \omega_t)
,\xs-y_t
\rangle + 2\eta^3 r^2L\\
\leq &
(1-2\eta c)	\|y_t-\xs \|_2^2 + \eta^2 r^2 + 2\eta^3 r^2 L
+
\E\Big[	\eta^2
L^2 \|\xs-y_t+\eta\omega_t\|_2^2
|\F_t\Big]\\
\leq &
(1-2\eta c)	\|y_t-\xs \|_2^2 + \eta^2 r^2 + 2\eta^3 r^2 L
+
\eta^2 L^2 \|\xs-y_t\|_2^2+
\eta^4r^2 L^2\\	
= &
(1-2\eta c+\eta^2 L^2)	\|y_t-\xs \|_2^2 
+ 
\eta^2r^2 (1+\eta L)^2\\	
\end{align*}
\noindent
\textbf{Step 2.} Since SGD makes progress in every step, after many steps, SGD gets very close to $\xs$ in expectation. By Markov inequality, this event holds with large probability. 

Notice that since $\eta< \frac{c}{L^2}$, we have $\lambda =2\eta c - \eta^2L^2> \eta c>0$. 
Recall $b\triangleq \eta^2r^2 (1+\eta L)^2$, we get:
\[
\E[\|y_{t+1}-\xs\|_2^2|\F_t]
\leq (1-\lambda )\|y_t-\xs\|_2^2 + 
b
\] 
Let  $G_t=(1-\lambda )^{-t} (\|y_t-\xs\|_2^2-\frac{b}{\lambda})$, we get:
\[
\E[G_{t+1}|\F_t]\leq G_t
\]
That means, $G_t$ is a supermartingale.
We have 
\[
\E[G_{T_1}|\F_{{T_1}-1}]\leq G_0
\]
Which gives 
\begin{align*}
\E\left [\|y_{T_1}-x^*\|_2^2 -\frac{b}{\lambda} \Big|\F_{{T_1}-1}\right ]&\leq (1-\lambda)^{T_1}
 (\|y_0 - x^*\|_2^2 
-\frac{b}{\lambda}
)\\&\leq 
(1-\lambda)^{T_1} 
\|y_0 - x^*\|_2^2 
\end{align*}
That is, 
\[
\E[\|y_{T_1}-x^*\|_2^2|\F_{{T_1}-1}]\leq 
\frac{b}{\lambda} + (1-\lambda)^{T_1} \|y_0 - x^*\|_2^2 
\]
Since $T_1\geq \frac{\log \left (\frac {\lambda \|y_0 - x^*\|_2^2 }{b}\right )}{\lambda}$, 
we get: 
\[
\E[\|y_{T_1}-x^*\|_2^2|\F_{T_1-1}]\leq 
\frac{2b}{\lambda}
\]
By Markov inequality, we know with probability at least  $0.9$, 
\begin{align}
\|y_{T_1}-x^*\|_2^2 \leq \frac{20b}{\lambda}
\label{eqn:phase_I}
\end{align}

For notational simplicity,
for the analysis below we relabel the point $y_{T_1}$ as $y_0$. Therefore, at time $0$ we already have $\|y_0-x^*\|_2^2\leq \frac{20b}{\lambda}$. 

\noindent
\textbf{Step 3.} Conditioned on the event that we are close to $\xs$, below we show that 
if for $t_0> t\geq 0$, $y_t$ is close to $x^*$,
then $y_{t_0}$ is also close to $\xs$ with high probability.

Let $\zeta=\frac{9T_2}{4}$.
Let event $\EE_t=\{
\forall \tau \leq t, \|y_{\tau}-x^*\|\leq  \mu\sqrt{\frac{b}{\lambda }}
=\delta 
\}$, 
where $\mu$ is a parameter satisfies 
$\mu \geq \max \{
8, 
42
\log^{\frac12} (\zeta)
\}$. If with probability $\frac{5}{9}$,  $\EE_t$ holds for every $t\leq T_2$, we are done. 

By the previous calculation, we know that ($\one_{\EE_t}$ is the indicator function for $\EE_t$)
\[
\E[G_t \one_{\EE_{t-1}} | \F_{t-1}]
\leq G_{t-1}\one_{\EE_{t-1}}
\leq G_{t-1}\one_{\EE_{t-2}}
\]

So $G_t \one_{\EE_{t-1}}$ is a supermartingale, with the initial value $G_0$. In order to apply Azuma inequality, we first bound the following term (notice that we use $\E[\omega_t]=0$ multiple times):

\begin{align*}
&|G_{t+1}\one_{\EE_{t}}-\E[G_{t+1}\one_{\EE_{t}}|\F_t]|\\
=&(1- \lambda)^{-t}|
\|
y_t - \eta \omega_t
-\eta\nabla f(y_t-\eta\omega_t)-x^*
\|_2^2
-
\E[
\|
y_t - \eta \omega_t
-\eta\nabla f(y_t-\eta\omega_t)-x^*
\|_2^2
|\F_t]
|\one_{\EE_{t}}\\
\leq &(1- \lambda)^{-t}|
2\langle 
-\eta \omega_t,
y_t-\eta\nabla f(y_t-\eta\omega_t)-x^*
\rangle +
\|\eta \omega_t\|_2^2
+\|y_t-\eta\nabla f(y_t-\eta\omega_t)-x^*\|_2^2
\\
&-
\E[
2\langle 
-\eta \omega_t,
y_t-\eta\nabla f(y_t-\eta\omega_t)-x^*
\rangle +
\|\eta \omega_t\|_2^2
+\|y_t-\eta\nabla f(y_t-\eta\omega_t)-x^*\|_2^2
|\F_t]
\\
=&(1- \lambda)^{-t}|
\|\eta \omega_t\|_2^2- \E[\|\eta \omega_t\|_2^2|\F_t]
-
2\langle 
\eta \omega_t,
y_t-\eta\nabla f(y_t-\eta\omega_t)-x^*
\rangle +
\|y_t-\eta\nabla f(y_t-\eta\omega_t)-x^*\|_2^2
\\
&-
\E[
2\langle 
\eta \omega_t,
\eta\nabla f(y_t-\eta\omega_t)
\rangle +
\|y_t-\eta\nabla f(y_t-\eta\omega_t)-x^*\|_2^2
|\F_t]
\\
\leq &(1- \lambda)^{-t}|
\eta^2 r^2+
2\eta r \|y_t - x^*\|+
2\langle 
\eta \omega_t,
\eta\nabla f(y_t-\eta\omega_t)
\rangle 
+ \|\eta\nabla f(y_t-\eta\omega_t)-\eta \nabla f(y_t)+
\eta\nabla f(y_t)\|_2^2
\\&-\E[\|\eta\nabla f(y_t-\eta\omega_t)-\eta \nabla f(y_t)+
\eta\nabla f(y_t)\|_2^2|\F_t]
+2\langle y_t-x^*, 
\eta\nabla f(y_t-\eta\omega_t)
\\&-E[\eta\nabla f(y_t-\eta\omega_t)|\F_t]
\rangle 
-
\E[
2\langle 
\eta \omega_t,
\eta\nabla f(y_t-\eta\omega_t)
\rangle 
|\F_t]
\\
\leq &(1- \lambda)^{-t}|
\eta^2 r^2+
2\eta r \|y_t - x^*\|+
4\eta^2 r \|\nabla f(y_t-\eta\omega_t)\|_2
+\eta^2  (
2\eta^2 r ^2 L^2
+ 2\langle 
\nabla f(y_t), \nabla f(y_t- \eta \omega_t)\\&-\nabla f(y_t)
-\E[\nabla f(y_t- \eta \omega_t)-\nabla f(y_t)|\F_t]
\rangle 
 )
+2\eta\Big\langle y_t-x^*, 
\nabla f(y_t-\eta\omega_t)-\nabla f(y_t)
\\&- E[\nabla f(y_t-\eta\omega_t)-\nabla f(y_t)|\F_t]
\Big\rangle 
\\
=&(1- \lambda)^{-t}|
\eta^2 r^2+
2\eta r \|y_t - x^*\|+
4\eta^2 r L ( \eta r + \|y_t-x^*\|_2)
\\&
+\eta^2 \left (
2\eta^2 r ^2 L^2
+ 4
L \|y_t-x^*\|_2 
\eta r L 
\right )
+4\eta^2 rL  \|y_t-x^*\| 
\\
\leq &(1-\lambda)^{-t} \left (
3.5\eta^2 r^2 + 7 \eta r \delta 
\right )
\end{align*}
Where the last inequality uses the fact that $\eta L \leq \frac12$ and $\|y_t-x^*\|_2\leq \delta$ (as $\one_{\EE_{t}}$ holds).
Let $M\triangleq 3.5\eta^2 r^2 + 7 \eta r \delta $.
Let $d_{\tau}=|G_{\tau}\one_{\EE_{\tau-1 }}-\E[G_{\tau }\one_{\EE_{\tau-1}}|\F_t]|$, we have

\[
\sum_{\tau =1}^{t}d_{\tau }^2 = 
\sum_{\tau =1}^{t} (1-\lambda)^{-2\tau } M^2
\]

\[
r_t =\sqrt{\sum_{\tau=1}^t d_{\tau}^2}= M \sqrt{
	\sum_{\tau =1}^{t} (1-\lambda)^{-2\tau }
}
\]

Apply Azuma inequality (Theorem \ref{thm:azuma}), for any $\zeta>0$, we know
\begin{align*}
&\Pr(G_t\one_{\EE_{t-1}} - G_0 \geq \sqrt{2}  r_t \log^{\frac12} (\zeta)) \leq \exp\left (\frac{-2 r_t^2 \log ({\zeta})}{2\sum_{\tau =1}^t d_{\tau}^2} \right )=
\exp^{-\log (\zeta) }= \frac 1 \zeta
\end{align*}

Therefore, 
with probability $1-\frac1{\zeta}$, 
\[
G_t\one_{\EE_{t-1}} \leq G_0 + \sqrt{2}  r_t \log^{\frac12} (\zeta)
\]

\noindent
\textbf{Step 4.} 
The inequality above says, if $\EE_{t-1}$ holds, i.e., for all $\tau\leq t-1, \|y_\tau -\xs \|\leq \delta$, then with probability $1-\frac1{\zeta}$, $G_t$ is bounded. If we can show from the upper bound of $G_t$ that $\|y_t -\xs \|\leq \delta$ is also true, we automatically get $\EE_{t}$ holds. In other words, that means if $\EE_{t-1}$ holds, then $\EE_{t}$ holds with probability $1-\frac1{\zeta}$. Therefore, by applying this claim $T_2$ times, we get $\EE_{T_2}$ holds with probability $1-\frac{T_2}{\zeta}=\frac{5}{9}$.  Combining with inequality (\ref{eqn:phase_I}), we know with probability at least $1/2$, the theorem statement holds. Thus,  it remains to show that 
$\|y_t -\xs \|\leq \delta$.

If $G_t\one_{\EE_{t-1}}  \leq G_0 + \sqrt{2}  r_t \log^{\frac12} (\zeta)$, we know 
\[
(1-\lambda )^{-t} \left (\|y_t - x^*\|_2^2 -\frac{b}{\lambda}\right )
\leq \|y_0 - x^* \|_2^2 -\frac{b}{\lambda}+ \sqrt{2}  r_t \log^{\frac12} (\zeta)
\]
So 
\begin{align*}
&\|y_t - x^*\|_2^2 
\leq (1-\lambda )^t \left ( \|y_0 - x^* \|_2^2 
+
\sqrt{2}  r_t \log^{\frac12} (\zeta)
\right )+\frac{b}{\lambda}
\\\leq &
\|y_0 - x^* \|_2^2 
+\sqrt{2}(1-\lambda )^t 
r_t \log^{\frac12} (\zeta)
+\frac{b}{\lambda}
\end{align*}

Notice that 
\begin{align*}
&(1- \lambda )^t r_t
=
(1- \lambda )^t 
M \sqrt{
	\sum_{\tau =1}^{t} (1-\lambda)^{-2\tau }}
=
M \sqrt{
	\sum_{\tau =1}^{t} (1-\lambda)^{2(t-\tau) }}
\\=&
M \sqrt{
	\sum_{\tau =0}^{t-1} (1-\lambda)^{2\tau }}
\leq 
M 
\sqrt{
	\frac{1}{1-(1-\lambda)^2} 
}\leq  
\frac{M}{\sqrt{\eta c}} 
\end{align*}

The  second last inequality holds because we know 
$
\frac{1}{1-(1-\lambda)^2}=
\frac{1}{2\lambda -\lambda^2}
\leq 
\frac{1}{\lambda}
\leq \frac1{\eta c}
$, since $\lambda= 2\eta c -\eta^2 L^2 \leq 2\eta c <1$, and 
$\lambda > \eta c$. 

That means, 
\begin{align*}
&\|y_t-x^*\|_2^2 
\leq   \|y_0-x^*\|_2^2+   \frac{\sqrt 2 M}{\sqrt{\eta c}} 
\log^{\frac12} (\zeta)+\frac{b}{\lambda}
\\\leq &
\frac{\sqrt 2 (3.5\eta^2 r^2 + 7 \eta r \delta
	)}{\sqrt{\eta c}} 
\log^{\frac12} (\zeta)
+ \frac{21b}{\lambda}
\end{align*}

It remains to prove the following lemma, which we defer to Appendix 
\ref{appendix:small_lemma}.

\begin{lemma}
	\label{lem:small_lemma}
	\[
	\frac{\sqrt 2 (3.5\eta^2 r^2 + 7 \eta r \delta
		)}{\sqrt{\eta c}} 
	\log^{\frac12} (\zeta)
	+ \frac{21b}{\lambda}
	\leq \delta^2
	\]
\end{lemma}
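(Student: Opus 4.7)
The plan is to normalize all three terms of the inequality against $\delta^2 = \mu^2 b/\lambda$ and then exploit the two lower bounds on $\mu$, namely $\mu \ge 8$ and $\mu \ge 42 \log^{1/2}(\zeta)$. The argument is a chain of elementary inequalities with some constant-tracking; no single step is conceptually hard, but the direction of each inequality must be chosen carefully.

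First I would convert the $1/\sqrt{\eta c}$ factor in the first summand into $1/\sqrt{\lambda}$. From $\lambda = 2\eta c - \eta^2 L^2 < 2\eta c$ we get $\eta c > \lambda/2$, hence $1/\sqrt{\eta c} < \sqrt{2/\lambda}$. Combined with $\eta^2 r^2 \le b$ and $\eta r \le \sqrt{b}$ (both immediate from $b = \eta^2 r^2 (1+\eta L)^2$), the first summand is bounded by $(7b + 14\sqrt{b}\,\delta)\log^{1/2}(\zeta)/\sqrt{\lambda}$.

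Next I would rewrite each piece in terms of $\delta^2$ via the identity $\sqrt{b/\lambda} = \delta/\mu$. The $\sqrt{b}\,\delta$ piece cleanly becomes $14\delta \cdot \sqrt{b/\lambda}\,\log^{1/2}(\zeta) = 14\delta^2 \log^{1/2}(\zeta)/\mu$. For the $b/\sqrt{\lambda}$ piece I would factor $b/\sqrt{\lambda} = \sqrt{\lambda}\,(b/\lambda) = \sqrt{\lambda}\,\delta^2/\mu^2$ and invoke $\lambda < 2\eta c < 1$ (which follows from $\eta < 1/(2c)$) to conclude $7b/\sqrt{\lambda} \le 7\delta^2/\mu^2$. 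The trailing $21b/\lambda$ is exactly $21\delta^2/\mu^2$.

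Combining everything, the LHS is at most $\delta^2\bigl[(7\log^{1/2}(\zeta) + 21)/\mu^2 + 14\log^{1/2}(\zeta)/\mu\bigr]$, and it suffices to verify this bracket is $\le 1$. Using $\mu \ge 42\log^{1/2}(\zeta)$: $14\log^{1/2}(\zeta)/\mu \le 1/3$ and $7\log^{1/2}(\zeta)/\mu^2 \le 1/(6\mu)$. Using $\mu \ge 8$: $1/(6\mu) \le 1/48$ and $21/\mu^2 \le 21/64$. These sum to $1/3 + 1/48 + 21/64 = 131/192 < 1$, which finishes the lemma. The one place where I could easily trip up is the direction of the inequality between $\eta c$ and $\lambda$: one must use the upper bound $\lambda < 2\eta c$ to control $1/\sqrt{\eta c}$, not the looser $\lambda > \eta c$; otherwise the conversion to $1/\sqrt{\lambda}$ goes the wrong way.
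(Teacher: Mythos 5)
Your proof is correct and follows essentially the same route as the paper: both arguments bound each of the three summands against $\delta^2 = \mu^2 b/\lambda$, both convert $1/\sqrt{\eta c}$ via $\lambda < 2\eta c$, and both invoke the two lower bounds on $\mu$ (the paper caps each summand at $\delta^2/3$, while you aggregate the normalized coefficients into $131/192 < 1$, which is a cosmetic difference in bookkeeping rather than a different idea).
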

Therefore, $\|y_t -\xs \|\leq \delta$. 
Combining the 4 steps together, we have proved the theorem.

\section{Empirical Observations}
\label{sec:empirical}

\begin{figure*}[t!]
	\centering
	\begin{subfigure}[t]{0.32\textwidth}
		\centering
		\includegraphics[height=1.4in]{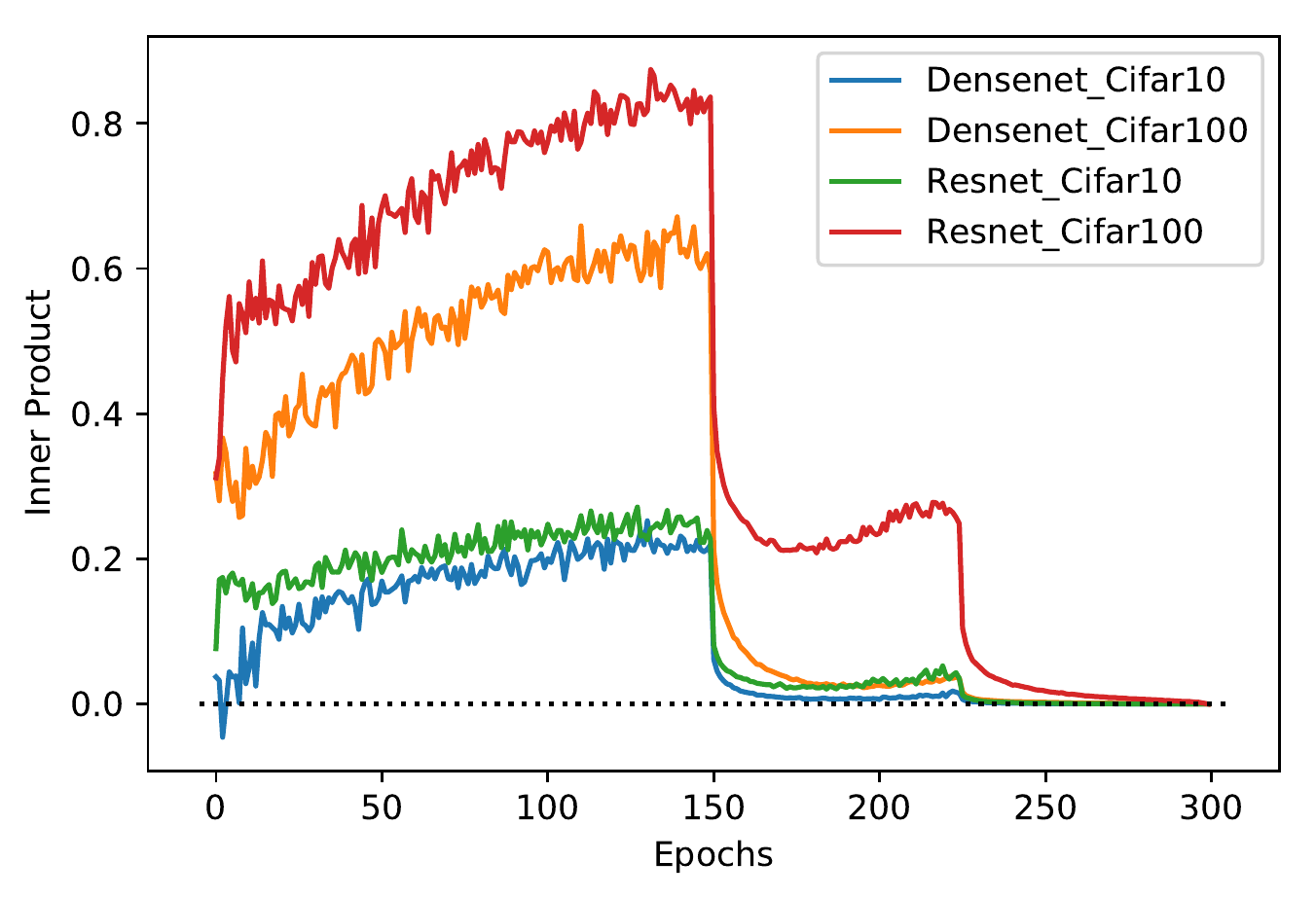}
		\caption{SGD trajectory is locally one point convex.}
		\label{fig:sgd_opc}
	\end{subfigure}%
	~ 
	\begin{subfigure}[t]{0.32\textwidth}
		\centering
		\includegraphics[height=1.4in]{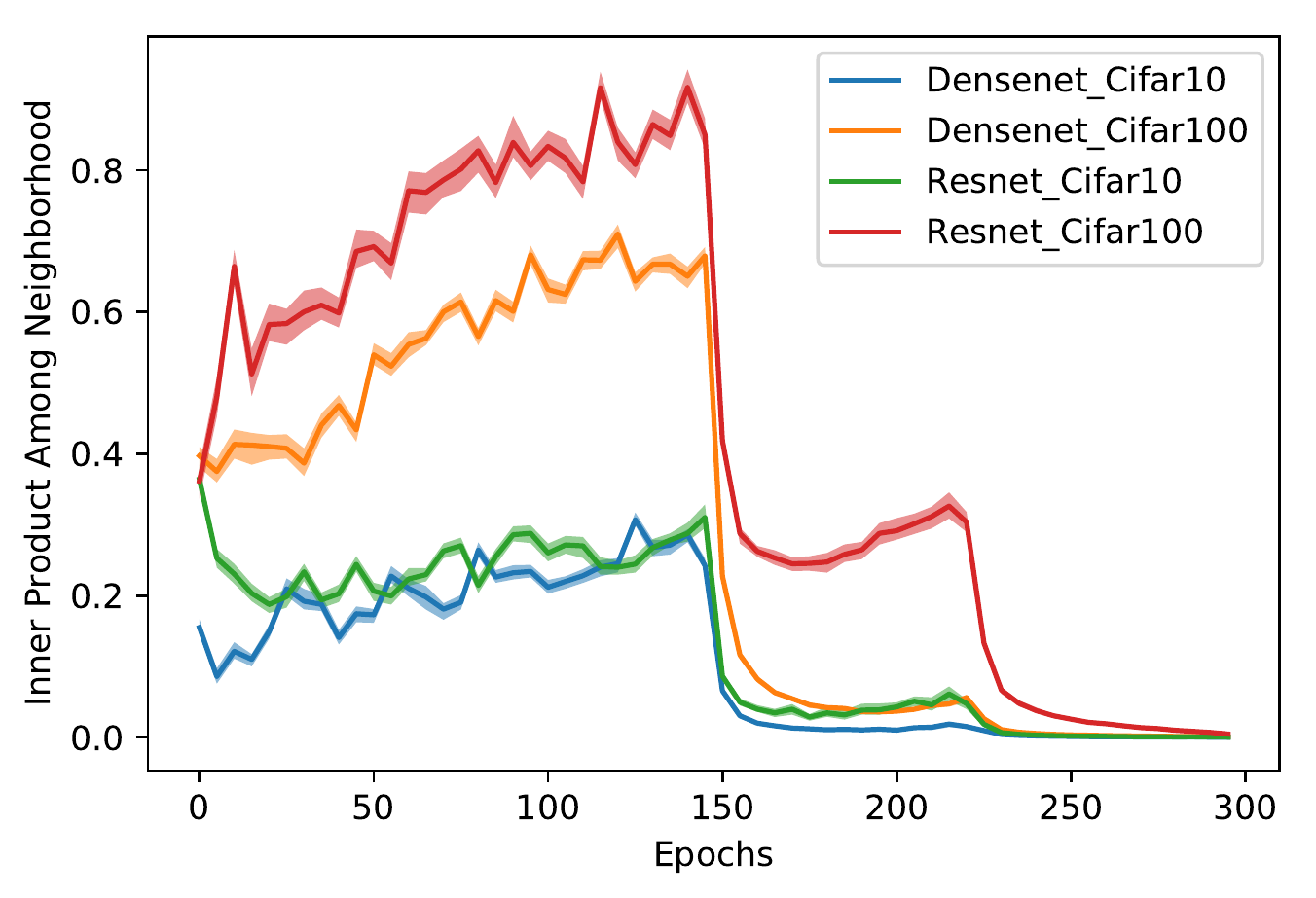}
		\caption{The neighborhood of SGD trajectory is one point convex.}
		\label{fig:neighbor_opc}
	\end{subfigure}
	~
	\begin{subfigure}[t]{0.32\textwidth}
		\centering
		\includegraphics[height=1.4in]{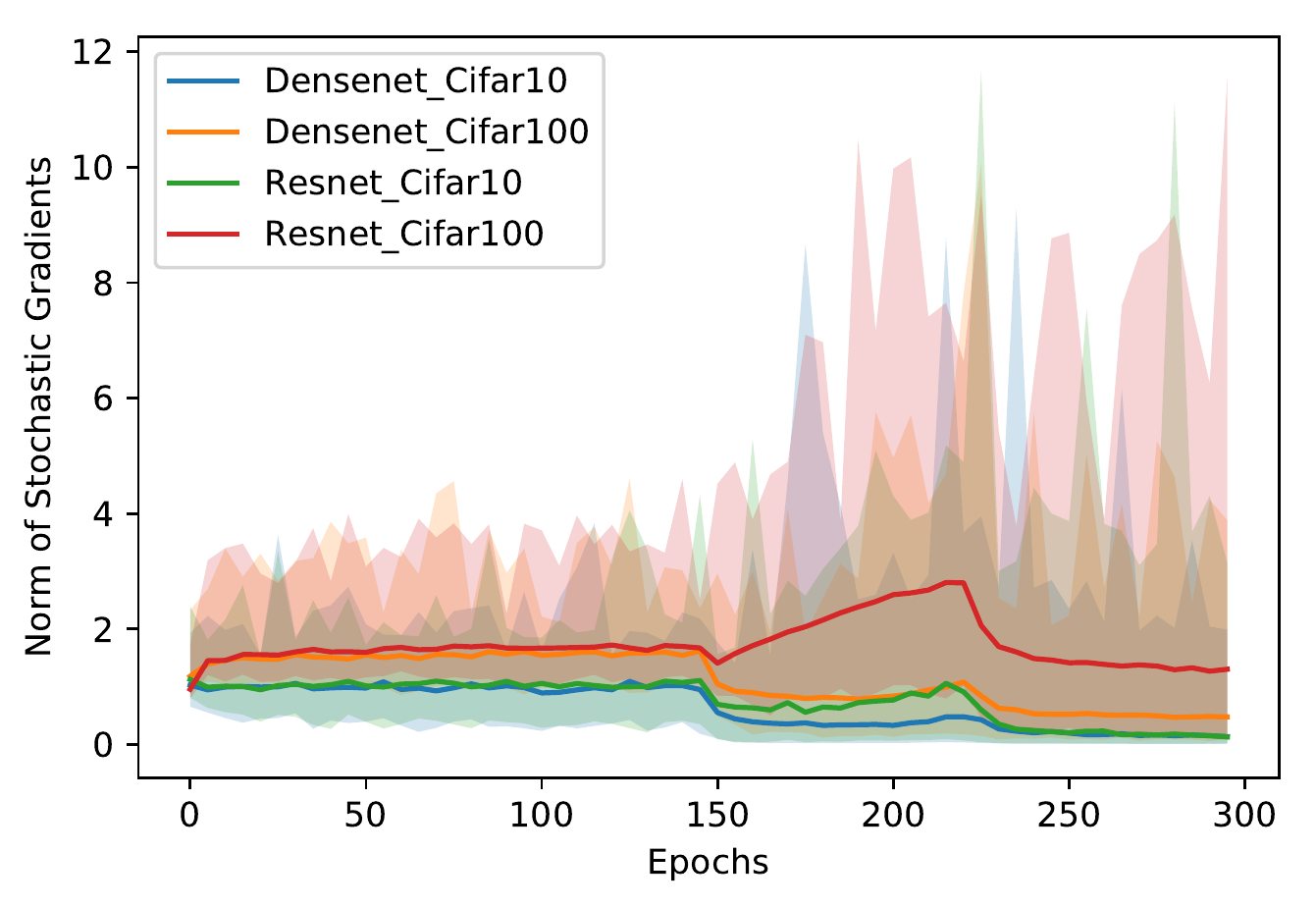}
		\caption{The norm of stochastic gradient}\label{fig:var_sgd}
	\end{subfigure}
	\caption{(a).  The inner product between the negative gradient and $x_{300}-x_t$ for each epoch $t\geq 5$ is always positive. Every data point is the \textbf{minimum} value among $5$ trials. (b). Neighborhood of SGD trajectory is also one point convex with respect to $x_{300}$. (c). Norm of stochastic gradient}
\end{figure*}
In this section, we explore the loss surfaces of modern neural networks, and show that they enjoy many nice one point convex properties. Therefore, our main theorem could be used for explaining why SGD works so well in practice. 


\subsection{The SGD trajectory is one point convex}

It is well known that the loss surface of neural network is highly non-convex, with numerous local minima. However, we observe that the loss surface is consisted of many one point convex basin region, while each time SGD traverses one of such regions. 

See Figure \ref{fig:sgd_opc} for details.
We ran experiments on Resnet \cite{resnet} ($34$ layers, $\approx1.2$M parameters), Densenet \cite{densenet} ($100$ layers, $\approx0.8$M parameters) on cifar10 and cifar100, each for 5 trials with $300$ epochs and different initializations. 
For the start of every epoch $x_t$ in each trial,
we compute the inner product 
between the negative gradient $-\nabla f(x_t)$ and the direction $x_{300}-x_t$. In Figure \ref{fig:sgd_opc}, we plot the minimum value for every epoch among $5$ trials for each setting. Notice that except for the starting period of densenet on Cifar-10, all the other networks in all trials have positive inner products, which shows that the trajectory of SGD (except the starting period) is one point convex with respect to the final solution\footnote{Similar observations were implicitly observed previously \cite{qualitativelycharacterizing}.}.
In these experiments, we have used the standard step size schedule ($0.1$ initially, $0.01$ after epoch $150$, and $0.001$ after epoch $225$). However, 
we got the same observation when using smoothly decreasing step sizes (shrink by $0.99$ per epoch).

\subsection{The neighborhood of the trajectory is one point convex}

Having a one point convex trajectory for $5$ trials does not suffice to show SGD always has a simple and easy trajectory, due to the randomness of the stochastic gradient. Indeed, by a slight random perturbation, SGD might be in a completely different trajectory that is far from being one point convex to the final solution. 
However, in this subsection, we show that it is not the case, as the SGD trajectory is one point convex after convolving with uniform ball with radius $0.5$. That means, the whole neighborhood of the SGD trajectory is one point convex with respect to the final solution. 

In this experiment, we tried  Resnet ($34$ layers, $\approx1.2$M parameters), Densenet ($100$ layers, $\approx0.8$M parameters) on cifar10 and cifar100\footnote{We also tried VGG with $\approx 1$M parameters, but does not have similar observations. This might be why Resnet and Densenet are slightly easier to optimize.}. 
For every epoch in each setting, we take one point and look at its neighborhood with radius $0.5$ (upper bound of the length of one SGD step, as we will show below).
We take $100$ random points inside each neighborhood to verify Assumption \ref{assump:main:assumption}\footnote{We also tried to sample points that are one SGD step away to represent the neighborhood, and got similar observations.}. More specifically, for every random point $w$ in the neighborhood of $x_t$, we computer $\langle -\nabla f(w), x_{300}-x_t\rangle$. 
Figure  \ref{fig:neighbor_opc} shows the mean value (solid line), as well as upper and lower bound of the inner product (shaded area). As we can see, the inner products for all epochs in every setting have small variances, and are always positive. Although we could not verify Assumption \ref{assump:main:assumption} by computing the exact expectation due to limited computational resources, from Figure  \ref{fig:neighbor_opc}
and Hoeffding bound (Lemma \ref{lem:hoeffding}), we conclude that 
Assumption \ref{assump:main:assumption} should hold
with high probability.

\begin{lemma}[Hoeffding bound \cite{hoeffding}]
	\label{lem:hoeffding}
	Let $X_1, \dots, X_n$ be i.i.d.~random variables bounded by the interval $[a, b]$. Then 
	$
	\Pr\left (
	\frac1n \sum_i X_i - 
	\E [X_1]\geq t
	\right )
	\leq  \exp\left (
	- \frac{2nt^2}{(b-a)^2}
	\right )
	$.
\end{lemma}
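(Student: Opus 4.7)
\textbf{Proof plan for Lemma~\ref{lem:hoeffding}.} This is the classical one-sided Hoeffding inequality, so the plan is the standard Chernoff/MGF argument. Writing $S_n=\sum_{i=1}^n X_i$ and $\mu=\E[X_1]$, I would first reduce to the zero-mean case by setting $Y_i:=X_i-\mu$ (which is bounded in an interval of the same length $b-a$), and then apply Markov's inequality to the exponential moment: for any $s>0$,
\[
\Pr\!\left(\tfrac{1}{n}\,S_n - \mu \ge t\right) \;=\; \Pr\!\left(e^{s\sum_i Y_i}\ge e^{snt}\right) \;\le\; e^{-snt}\,\prod_{i=1}^n \E[e^{sY_i}],
\]
where the equality uses independence. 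The whole proof is now a contest between this product MGF and the dimensionless quantity $e^{snt}$.

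The bound on the MGF is Hoeffding's lemma: for a mean-zero $Y$ supported in $[a',b']$, $\E[e^{sY}]\le \exp(s^2(b'-a')^2/8)$. I would prove it by convexity: on $[a',b']$ the function $y\mapsto e^{sy}$ lies below its secant, so
\[
e^{sy}\;\le\;\tfrac{b'-y}{b'-a'}\,e^{sa'}+\tfrac{y-a'}{b'-a'}\,e^{sb'};
\]
taking expectation and using $\E[Y]=0$ kills the linear-in-$y$ terms and leaves a quantity depending only on $a', b', s$. Reparametrising by $h=s(b'-a')$ and $p=-a'/(b'-a')$ rewrites this quantity as $\exp(L(h))$ where $L(h)=-hp+\log(1-p+p\,e^h)$. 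Direct differentiation gives $L(0)=L'(0)=0$, and $L''(h)=q(1-q)$ for $q=p e^h/(1-p+p e^h)\in[0,1]$, which is at most $1/4$ by the elementary inequality $x(1-x)\le 1/4$. Taylor's theorem then yields $L(h)\le h^2/8$, which is Hoeffding's lemma.

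Plugging this into the product gives $\prod_i\E[e^{sY_i}]\le \exp(ns^2(b-a)^2/8)$, so the tail is bounded by $\exp(ns^2(b-a)^2/8 - snt)$. The final step is simply to optimise the free parameter $s>0$; the exponent is a quadratic in $s$ minimised at $s^\star=4t/(b-a)^2$, at which point it equals $-2nt^2/(b-a)^2$, matching the statement. The only genuinely non-routine step in the whole argument is Hoeffding's lemma, and within that the substantive content is the one-line bound $x(1-x)\le 1/4$; independence and the iid hypothesis are used only to factor the MGF, and the optimisation in $s$ is one derivative. I would expect no obstacles beyond keeping the reparametrisation $h,p$ straight in the proof of the MGF bound.
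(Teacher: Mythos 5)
The paper does not prove this lemma at all; it is cited directly from \cite{hoeffding} as a standard tool. Your proposal is the classical Chernoff/moment-generating-function derivation together with a proof of Hoeffding's lemma via the secant-line convexity bound and the reparametrisation $h=s(b'-a')$, $p=-a'/(b'-a')$, and every step checks out (including the optimisation $s^\star = 4t/(b-a)^2$ giving exponent $-2nt^2/(b-a)^2$). Since the paper offers no proof to compare against, there is nothing further to say beyond confirming that your argument is correct and is the textbook route; the only implicit assumption worth flagging is that a mean-zero $Y$ supported on $[a',b']$ forces $a'\le 0\le b'$, so that $p\in[0,1]$, which is what makes the bound $q(1-q)\le 1/4$ applicable.
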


Figure \ref{fig:var_sgd} shows the norm of the stochastic gradients, including both the mean value (solid lines), as well as upper and lower bounds (shaped area). For all settings, the stochastic gradients are always less than $5$ before epoch $150$ with learning rate $0.1$, and less than $15$ afterwards with learning rate $0.01$. Therefore, 
multiplying step size with gradient norm,
we know SGD step length is always bounded by $0.5$. 

Notice that the gradient norm gets bigger when we get closer to the final solution (after epoch $150$). This further explains why shrinking step size is important. 

\begin{figure*}[t!]
	\centering
	\begin{subfigure}[t]{0.32\textwidth}
		\centering
		\includegraphics[height=1.4in]{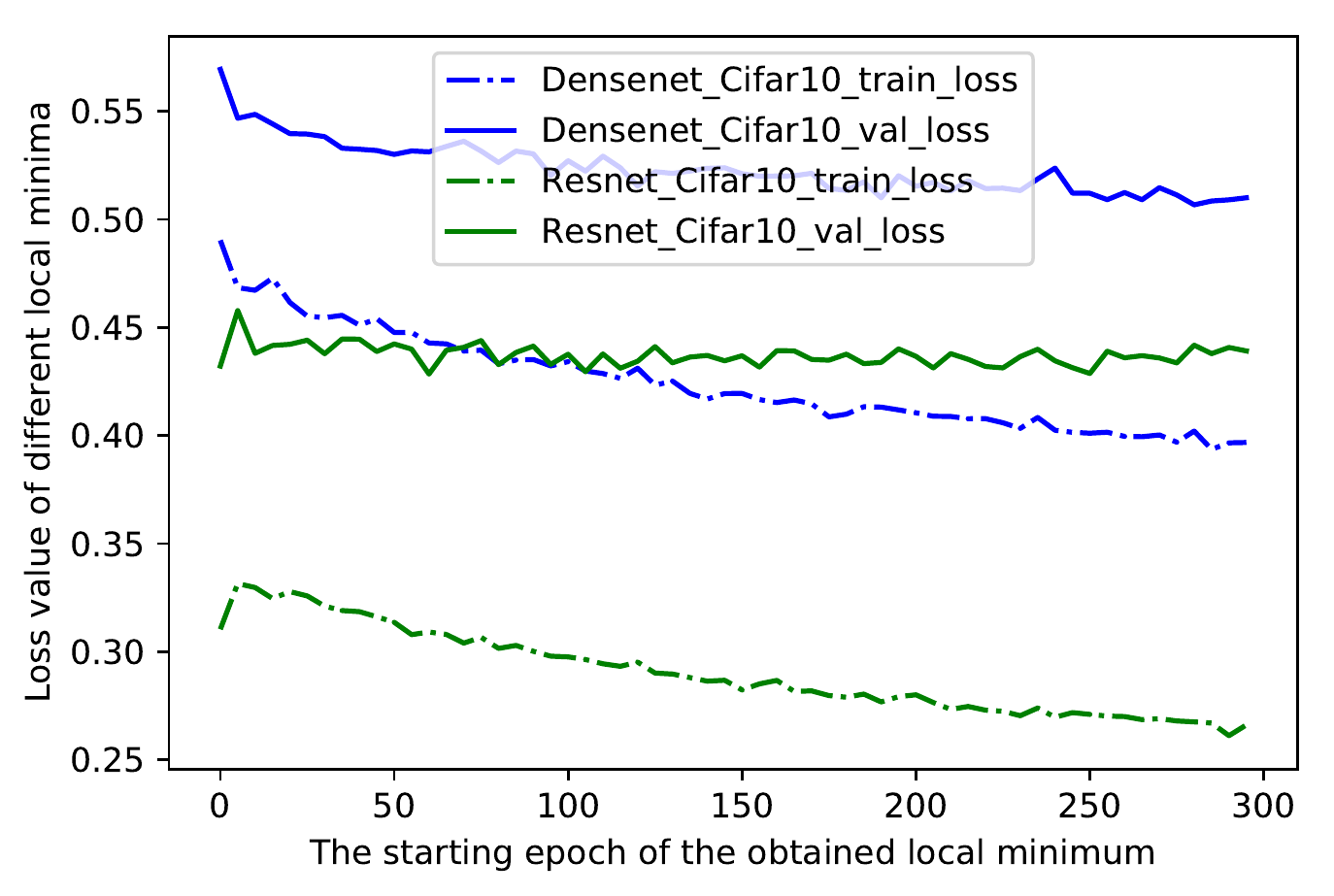}
		\caption{Loss value of different local minima on Cifar10}
		\label{fig:loss_val_1}
	\end{subfigure}%
	~ 
	\begin{subfigure}[t]{0.32\textwidth}
		\centering
		\includegraphics[height=1.4in]{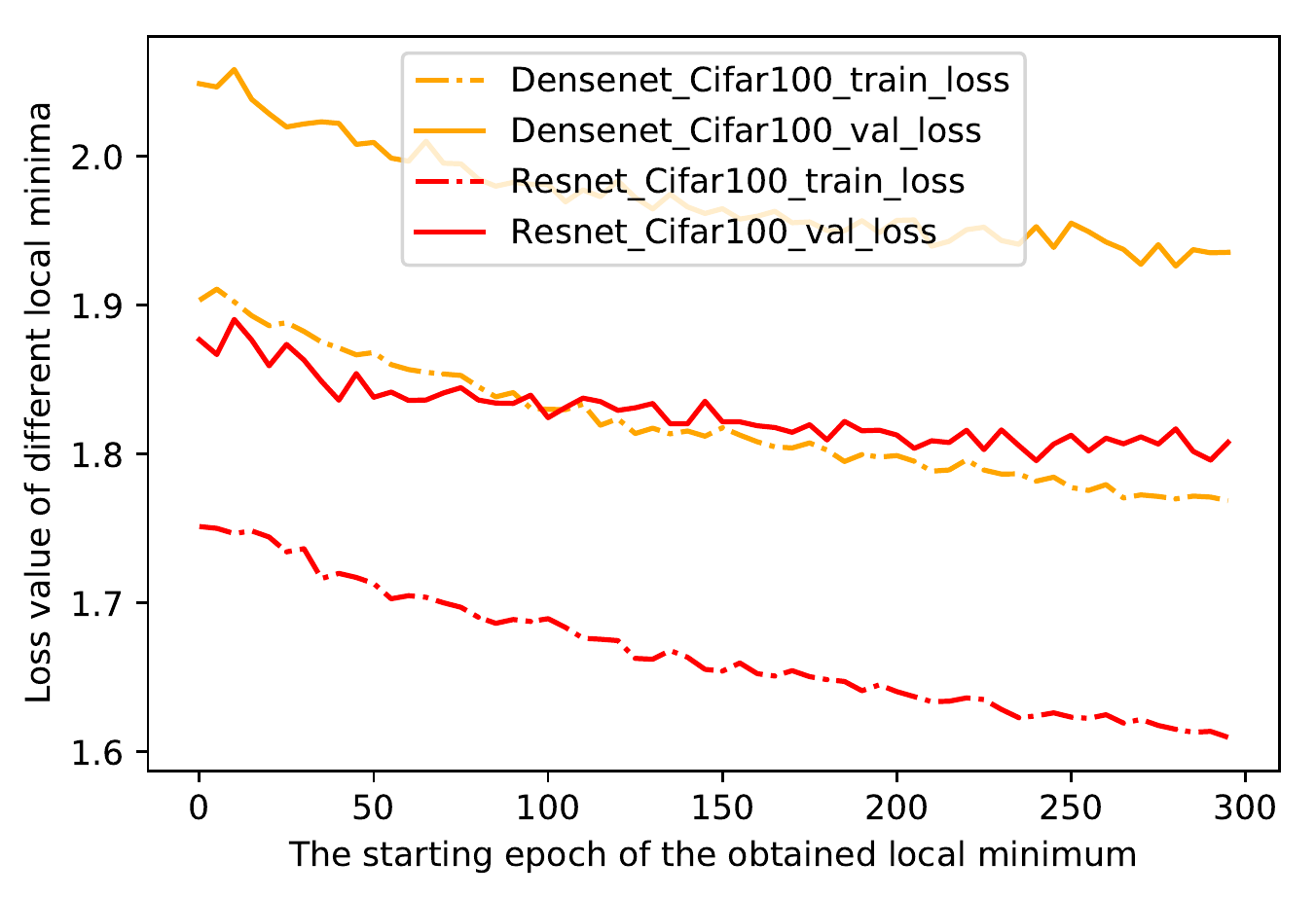}
		\caption{Loss value of different local minima on Cifar100}
		\label{fig:loss_val_2}
	\end{subfigure}
	~
	\begin{subfigure}[t]{0.32\textwidth}
		\centering
		\includegraphics[height=1.4in]{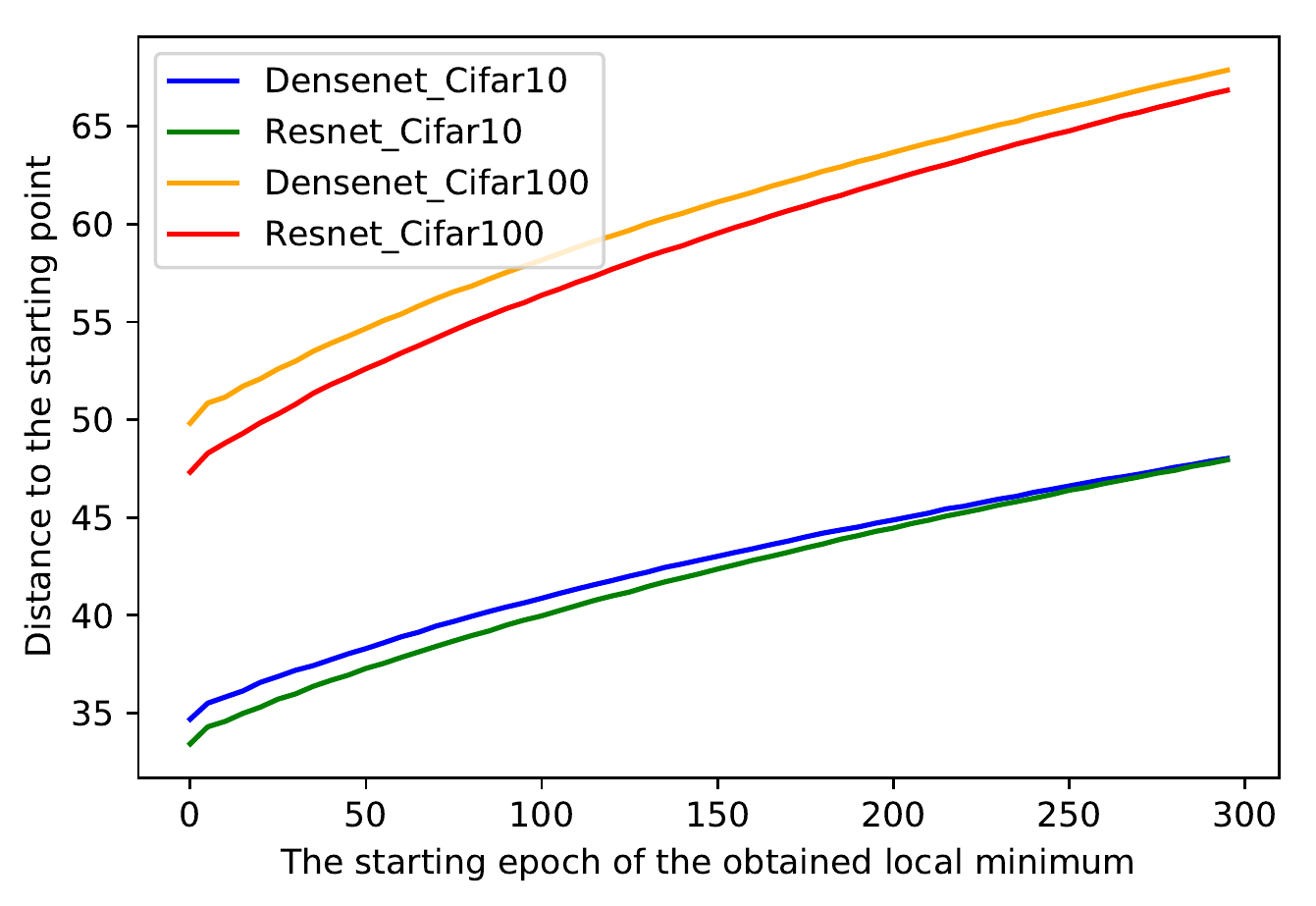}
		\caption{Distance from the local minima to the initialization}
	\end{subfigure}
	\caption{Spectrum of local minima on the loss surface on modern neural networks.}
\end{figure*}
\subsection{Loss surface is locally a ``slope''}

Even with the observation that the whole neighborhood along the SGD trajectory is one point convex with respect to the final solution, 
there exists a chicken-and-egg concern, as the final target is generated using the SGD trajectory. 

In this subsection, we show that the one point convexity is a pretty ``global'' property.
We were running Resnet and Densenet on Cifar10, but with smaller networks (each with about $10K$ parameters). For each network,
if we fix the first $10$ epochs, and generate $50$ SGD trajectories with different random seeds for $140$ epochs and $0.1$ learning rate, we get $50$ different final solutions (they are pretty far away from each other, with minimum pairwise distance $40$). 
For each network, if we look at the inner product between the negative gradient of \textbf{any} epoch of \textbf{any} trajectories, and the vector pointing to \textbf{any} final solutions, we find that the inner products are almost always positive. (only $0.1\%$ of the inner products are not positive for Densenet, and only $2$ out of $343,000$ inner products are not positive for Resnet).  

This indicates that the loss surface is ``skewed'' to the similar direction, and our observation that the whole SGD trajectory is one point convex w.r. to the last point is not a coincidence. Based on our Theorem \ref{thm:main}, such loss surface is very friendly to SGD optimization, even with a few exceptional points that are not one point convex with respect to the final solution.  

Notice that in general, it is not possible that all the negative gradients of all points are one point convex with respect to multiple target points. For example, if we take $1D$ interpolation between any two target points, we could easily find points that have negative gradients only pointing to one target point. However, based on our simulation, empirically SGD almost never traverse those regions.

\subsection{Spectrum of the local minima}

From the previous subsections, we know that the loss surface of neural network has great one point convex properties. It seems that by our Theorem \ref{thm:main}, SGD will almost always converge to a few target points (or regions). However, empirically SGD converges to very different target points. In this subsection, we argue that this is because of the learning rate is too big for SGD to converge (Theorem \ref{thm:large_lr}). On the other hand, whenever we shrink the learning rate to $0.01$, Theorem \ref{thm:main} immediately applies and SGD converges to a local minimum. 

In this experiment, we were running smaller version of Resnet and Densenet (each with about $10K$ parameters) on Cifar10 and Cifar100. For each setting, we first train the network with step size $0.1$ for $300$ epochs, then we pick different epochs as the new starting points for finding nearby local minima using smaller learning rates with additional $150$ epochs. 

See Figure \ref{fig:loss_val_1} and Figure \ref{fig:loss_val_2}. Starting from different epochs, we got local minima with decreasing validation loss and training loss. 

To show that these local minima are not from the same region, we also plot the distance of the local minima to the (unique) initialized point. As we can see, as we pick later epochs as the starting points, we get local minima that are farther away from the initialization with better quality (also observed in \cite{trainlonger}).

Furthermore, 
we observe that 
for every local minimum, 
the whole trajectory is always \textbf{one point convex} to that local minimum. 
Therefore, the time for shrinking learning rate decides the quality of the final local minimum. That is, using large step size initially avoids being trapped into a bad local minimum, and whenever we are distant enough from the initialization, we can shrink the step size and converge to a good local minimum (due to one point convexity by Theorem \ref{thm:main}).

\section{Conclusion}
In this paper, we take an alternative view of SGD that it is working on the convolved version of the loss function. Under this view, we could show that when the convolved function is one point convex with respect to the final solution $\xs$, SGD could escape all the other local minima and stay around $\xs$ with constant probability. 

To show our assumption is reasonable, we look at the loss surface of modern neural networks, and find that SGD trajectory has nice local one point convex properties, therefore the loss surface is very friend to SGD optimization. It remains an interesting open question to prove local one point convex property for deep neural networks.
\section*{Acknowledgement}
The authors want to thank Zhishen Huang for pointing out a mistake in an early version of this paper, and want to thank 
Gao Huang, Kilian Weinberger, Jorge Nocedal,  Ruoyu Sun, Dylan Foster and Aleksander Madry  for helpful discussions. This project is supported by a Microsoft
Azure research award and Amazon AWS research award.

\bibliography{ref}
\bibliographystyle{plain}
\newpage
\appendix
\onecolumn

\section{Discussions on one point convexity}
If $f$ is $\delta$-one point strongly convex around $x^*$ in a convex domain $\mathcal{D}$, then $x^*$ is the only local minimum point in $\mathcal{D}$ (i.e., global minimum). 

To see this, for any fixed $x\in \mathcal{D}$, 
look at the function $g(t) = f( t x^* + (1-t) x )$ for $t \in [0,1]$, then $g'(t) = \langle \nabla f(t x^* + (1-t) x), x^* - x \rangle$. The definition of $\delta$-one point strongly convex implies that the right side is negative for $t\in(0,1]$.  Therefore, $g(t)>g(1)$ for $t>0$.
This implies that for every point $y$ on the line segment joining $x$ to $x^*$, we have $f(y) > f(x^*)$, so $x^*$ is the only local minimum point. 
\section{Proof for Lemma \ref{lem:small_lemma}}
\label{appendix:small_lemma}
\begin{proof}
Recall that we want to show
	\[
	\frac{\sqrt 2 (3.5\eta^2 r^2 + 7 \eta r \delta
		)}{\sqrt{\eta c}} 
	\log^{\frac12} (\zeta)
	+ \frac{21b}{\lambda}
	\leq \delta^2
	=\frac{\mu^2 b}{\lambda }
	=\frac{\mu^2 \eta^2 r^2 (1+\eta L)^2}{\lambda }
	\]
	
On the left hand side there are three summands. Below we show that each of them is bounded by $\frac{\mu^2 b }{3\lambda}$\footnote{We made no effort to optimize the constants here.}. 

Since $\mu \geq \max \{
8, 
42
\log^{\frac12} (\zeta)
\}$, we know $\frac{21 b}{\lambda}\leq 
\frac{63 b}{3\lambda}< \frac{8^2 b}{3\lambda}\leq 
\frac{\mu^2 b }{3\lambda}$.
Next, we have

\begin{align*}
&
42 \log^{\frac12} (\zeta)\leq \mu\\
\Rightarrow
&
\sqrt{30
	\log^{\frac12} (\zeta)
	\eta^{0.5} c^{0.5}}\leq \mu \\
\Rightarrow
&15
\log^{\frac12} (\zeta)
\leq 
\frac{\mu^2   }{2\eta^{0.5} c^{0.5} }\\
\Rightarrow
&\frac{15
}{\sqrt{c}} 
\log^{\frac12} (\zeta)
\leq 
\frac{\mu^2 \eta^{0.5}  }{\lambda }\\
\Rightarrow&\frac{ 3.5\sqrt 2\eta^{1.5} r^2 
}{\sqrt{c}} 
\log^{\frac12} (\zeta)
\leq 
\frac{\mu^2 \eta^2 r^2 }{3\lambda }\\
\Rightarrow&	\frac{ 3.5\sqrt 2\eta^2 r^2 
}{\sqrt{\eta c}} 
\log^{\frac12} (\zeta)
\leq 
\frac{\mu^2 \eta^2 r^2 (1+\eta L)^2}{3\lambda }\\
\end{align*}

Finally, 
\begin{align*}
& 
42
\log^{\frac12} (\zeta)
\leq 
\mu \\
\Rightarrow
& 
\frac{42
}{\sqrt{ c}} 
\log^{\frac12} (\zeta)
\leq 
\mu \sqrt{\frac{ 1}{ c }}
\\
\Rightarrow& 
\frac{ 7\sqrt 2 \eta r 
}{\sqrt{\eta c}} 
\log^{\frac12} (\zeta)
\leq 
\frac {\mu \sqrt{\frac{\eta^2 r^2 (1+\eta L)^2}{2\eta c }}}3
\\
\Rightarrow&\frac{ 7\sqrt 2 \eta r 
}{\sqrt{\eta c}} 
\log^{\frac12} (\zeta)
\leq 
\frac {\delta}3\\
\Rightarrow&
\frac{ 7\sqrt 2 \eta r \delta
}{\sqrt{\eta c}} 
\log^{\frac12} (\zeta)
\leq 
\frac {\delta^2}3
\end{align*}

Adding the three summands together, we get the claim. 
\end{proof}

\section{Proof for Theorem \ref{thm:large_lr}}
\label{appendix:large:lr:proof}
\begin{proof}
	Recall that we have $x_{t+1}=x_{t}- \eta 
	\nabla f(x_t)$. Since we have $\langle -\nabla f(x_t), \xs - x_t\rangle \leq c'\|\xs -x_t\|_2^2$, then 
	\begin{align*}
	\|x_{t+1}-\xs\|_2^2 
	&	=
	\|x_{t}- \eta 
	\nabla f(x_t)-\xs\|_2^2 \\
	&=\|x_t-\xs\|_2^2 +\eta^2\|\nabla f(x_t)\|_2^2 
	- 2\eta \langle \nabla f(x_t), 
	x_t-\xs\rangle \\
	& \geq 
	(1-2\eta c')\|x_t-\xs\|_2^2 +\eta^2\|\nabla f(x_t)\|_2^2 > \|x_t-\xs\|_2^2
	\end{align*}
	Where the last inequality holds since we know $\eta > \frac{2c'\|x_t-\xs\|_2^2 }{\|\nabla f(x_t)\|_2^2}$.
\end{proof}
\end{document}